\newtheorem{lemma}{Lemma}[section]
\newtheorem{theorem}{Theorem}[section]
\newtheorem{definition}[theorem]{Definition}
\theoremstyle{remark}
\newtheorem*{example}{Example}
\newcommand{\set}[1]{\{#1\}}
\newcommand{\tup}[1]{\langle #1\rangle} 
\newcommand{\A}{\mathcal{A}}
\newcommand{\I}{\mathcal{I}}
\newcommand{\M}{\mathcal{M}}
\newcommand{\R}{\mathcal{R}}
\newcommand{\goto}[1]{\mathrel{\raisebox{-2pt}{$\xrightarrow{#1}$}}}
\newcommand{\myi}{\emph{(i)}\xspace}
\newcommand{\myii}{\emph{(ii)}\xspace}
\newcommand{\myiii}{\emph{(iii)}\xspace}
\newcommand{\myiv}{\emph{(iv)}\xspace}
\newcommand{\myv}{\emph{(v)}\xspace}
\newcommand{\defterm}[1]{\ul{\textit{\mbox{#1}}}}	
\renewcommand{\defterm}[1]{\textbf{#1}}
\newcommand{\anEformula}{\psi(\uj,\self,e,\uv)}
\newcommand{\nop}{nop\xspace}
\newcommand{\act}{\alpha}
\newcommand{\actvec}{\vec{\act}}
\newcommand{\Acts}{\A}
\newcommand{\ActsA}{\A^{\mathit{loc}}}
\newcommand{\ActsAE}{\A^{\mathit{syn}}}
\newcommand{\async}{interleaved\xspace}
\newcommand{\sync}{concurrent\xspace}
\newcommand{\TOOL}{\textsl{SAFE}\xspace}
\newcommand{\TOOLtitle}{\TOOL: the Swarm Safety Detector\xspace}
\newcommand{\Rels}{\R}
\newcommand{\valueof}{\mathit{val}}
\newcommand{\var}{\mathit{v}}
\newcommand{\Vars}{V}
\newcommand{\dom}{\mathit{type}}
\newcommand{\IDS}{\textit{ID}}
\newcommand{\Types}{\Theta}
\newcommand{\type}{\theta}
\newcommand{\id}{\textsc{id}\xspace}
\newcommand{\goal}{\psi_{\mathit{goal}}}
\newcommand{\goalab}{\overline{\goal}}
\newcommand{\self}{\textit{\small{self}}\xspace}
\DeclareDocumentCommand{\locstateof}{ m o }{%
        \IfNoValueTF{#2}%
            {g{.}{#1}}%
            {#2{.}{#1}}%
}
\newcommand{\ABM}{{\textit{ab}(\M)}}
\newcommand{\localactof}[1]{\actvec{.}{#1}}
\newcommand{\varof}[1]{\var^{[#1]}}
\newcommand{\Eformula}{agent formula\xspace}
\newcommand{\mytexttt}[1]{{\small \texttt{#1}}}
\newcommand{\safe}{\texttt{SAFE}\xspace}
\newcommand{\unsafe}{\texttt{UNSAFE}\xspace}
\newcommand{\PMAS}{PMAS}
\newcommand{\ABMAS}{AB-\PMAS\xspace}
\newcommand{\MASsystem}{\PMAS\xspace}
\newcommand{\arracts}{\underline{act}}
\newcommand{\arr}{\textit{arr}}
\newcommand{\uarr}{\underline{\textit{arr}}}
\newcommand{\absVar}{\textit{X}}
\newcommand{\absActs}{\Acts^{a}}
\newcommand{\globvars}{\underline{glob}}
\newcommand{\absglobvars}{\underline{x}}
\newcommand{\env}{\textit{env}}
\newcommand{\phase}{{\small \textit{Phase}}}
\newcommand{\uj}{\ensuremath{\underline j}}
\newcommand{\ua}{\ensuremath{\underline a}}
\newcommand{\ue}{\ensuremath{\underline e}}
\newcommand{\ui}{\ensuremath{\underline i}}
\newcommand{\uv}{\ensuremath{\underline v}}
\newcommand{\ux}{\ensuremath{\underline x}}
\newcommand{\uz}{\ensuremath{\underline z}}
\newcommand{\uF}{\ensuremath{\underline F}}
\newcommand{\uG}{\ensuremath{\underline G}}
\newcommand{\cC}{\ensuremath \mathcal C}
\newcommand{\cM}{\ensuremath \mathcal M}
\newcommand{\cN}{\ensuremath \mathcal N}
\newcommand{\cS}{\ensuremath \mathcal S}
\begin{document}

\title{SMT-based Safety Verification of\\Parameterised Multi-Agent Systems}

\author{Paolo Felli$^1$, Alessandro Gianola$^{1,2}$, Marco Montali$^1$ \\
\\
 $^1$ Free University of Bozen-Bolzano, Bolzano, Italy\\
 \texttt{\textit{surname}@inf.unibz.it}\\
 $^2$ University of California San Diego, San Diego (CA), USA\\
 \texttt{agianola@eng.ucsd.edu}}

\date{August 10th, 2020}

\maketitle

\begin{abstract}
In this paper we study the verification of parameterised multi-agent systems (MASs), and in particular the task of verifying whether unwanted states, characterised as a given state formula, are reachable in a given MAS, i.e., whether the MAS is unsafe. 
The MAS is parameterised and the model only describes the finite set of possible agent templates, while the actual number of concrete agent instances for each template is unbounded and cannot be foreseen. This makes the state-space infinite. As safety may of course depend on the number of agent instances in the system, the verification result must be correct irrespective of such number. We solve this problem via infinite-state model checking based on satisfiability modulo theories (SMT), relying on the theory of array-based systems: 
we present parameterised MASs as particular array-based systems, under two  execution semantics for the	 MAS, which we call \sync and \async. We prove our decidability results under these assumptions and illustrate our implementation approach, called \TOOLtitle, based on the third-party model checker MCMT, which we evaluate experimentally. Finally, we discuss how this approach lends itself to richer parameterised and data-aware MAS settings beyond the state-of-the-art solutions in the literature, which we leave as future work. 
\end{abstract}

\newpage
\tableofcontents

\newpage


\section{Introduction}
\label{sec:introduction}

Multi-agent systems (MASs) are commonly used in many complex, real-life domains, so it has become crucial to be able to verify such systems against given specifications. This typically amounts to check the existence of execution strategies for the achievement of given goals or to compute counterexamples as evidence of points of potential failure. Model checking \cite{Clarke2018} is one of the most common approaches to verification of MASs, often with a focus on strategic abilities  \cite{BullingGJ15}. However, a common limitation in this literature is the assumption that the system is finite-state and fully specified, which in many applications requires to propositionalize crucial system features. Other approaches have thus tackled the verification of MASs in settings that are intrinsically infinite-state  \cite{EsparzaGLM17}, for which explicit model-checking techniques cannot be used off-the-shelf. These are the settings in which either some sort of \emph{data component} is present or where the concrete component instances of the MAS are not explicitly listed beforehand. 
Our work falls into the latter category, that  is the one of verification of parameterised MASs (\MASsystem{s}), recognised as a key reasoning task and addressed by a growing literature \cite{KouvarosAIJ16,Bloem15,KouvarosAAAI17,EsparzaGLM17,ConduracheMG19}. 
In \MASsystem{s}, the number of agents is \emph{unbounded} and \emph{unknown}, so that possibly infinite concrete MASs need to be considered: the task is to check whether  the specification is met by any (or all) concrete MAS that adheres to some behavioural structure (typically a set of agent \emph{templates}), without fixing the number of actual agents a priori. 
Here, we focus on checking \emph{safety}, namely that no state satisfying a state formula (existentially quantifying on agent instances) is reachable for any number of agent instances. E.g., checking that there will never be two agents in the restricted area. 
Note that this differs from checking that a strategy (for some agent) exists to prevent  unsafe states. Safety checking (and reachability) is a crucial property of MASs as well as finite and infinite dynamic systems, with a long-standing tradition (e.g., \cite{Abdulla96}).
Applications are numeorous, from the verification of properties of swarms to industry 4.0 \cite{AlechinaBGFLV19}, where one wants to check that instances of a product family will be manufacturable by robots from a fixed model catalogue. 

In this paper we present our verification technique based on an SMT \cite{BarrettT18} approach for array-based systems \cite{ijcar08,lmcs,FI,fmsd,CGGMR19,MSCS20}, characterising its soundness and completeness (and decidability of the task) under different assumptions. We detail our solution and comment on its implementation in the well-established SMT-based model-checker MCMT \cite{GhilardiR10}. As future work, we comment on how this framework lends itself to accommodate the other  source of infinity mentioned above, i.e. the \emph{data} dimension. 

The reminder of this paper is organized as follows.  In Section~\ref{sec:related}, we state the contributions of this work and relate our results to the previous literature on parameterised verification and, in particular, verification of parameterised multi-agent systems. Then, in Section~\ref{sec:mas} we provide the definition of Parameterized MAS (PMAS) and we present two different semantics for PMASs, i.e.  the concurrent and the interleaved ones:  this distinction gives rise to two corresponding classes of PMASs. In Section~\ref{sec:runs} the (un)safety checking problem for PMASs is introduced; all the results of the paper will focus on solving this kind of verification task. Then, in Section~\ref{sec:mas-abs} we give a formal encoding of PMASs into the array-based systems formalism: this translation depends on the specific semantics employed, hence two distinct encodings for interleaved and concurrent PMASs are provided. In Section~\ref{sec:results} we describe our symbolic variant of backward search, i.e. the verification procedure we employ to assess (un)safety of PMASs, and we discuss its soundness and completeness. In Section~\ref{sec:termination} we also device an additional (sufficient) syntactic condition to impose to PMASs in order to guarantee termination of backward search, and hence the decidability of the verification problem. Finally, in Section~\ref{sec:implem} we present and evaluate experimentally our user-interface tool, called SAFE, illustrating the implementation of our approach and how it is based on the state-of-the-art MCMT model checker. We conclude and discuss future work in Section~\ref{sec:futurework}.

First, we introduce our running example (a further example will be discussed in Section~\ref{sec:evaluation}).

\bigskip
\textbf{Example scenario.} 
Imagine a robotic swarm attacking a defence position, protected by a robot cannon. There are only two possible paths to reach the position: an attacker must first move to waypoint A or B, then move again to reach the target. Attackers can only move to either waypoint if the paths to A or B are not covered in snow, and similarly for reaching the target. The snow condition is not known in advance. The defensive cannon can target either waypoint with a blast or with an EMP pulse. The cannon program is so that a blast can only be fired if there are robots in that waypoint, and at least one robot under fire is hit.   
If instead the EMP pulse is directed towards A or B, no robot can move there. The cannon can use either the blast or activate the pulse at the same time but, while the EMP is active, the cannon can continue firing blasts. The EMP can cover either A or B, not both. 
The number of attackers is not known. 

It appears that, even if all paths are free of snow, an effective defensive plan exists: at the beginning, use the pulse (say against A); let the enemy robots make their moves to B (the pulse ramains active on A), then use the blast against B to destroy robots there. Whenever further attackers move to B, use again the blast, otherwise wait. 
If one path is not viable the plan is even simpler. 
Question: does this strategy work? Answer: only if the blast destroys all robots in the waypoint against which it is fired. Q: if blasts do not hit all robots under fire, how many attackers may have a chance of reaching the target? A: at least two, if they move to the waypoint B together, since blasts always hit at least one robot. Q: is any attack plan for at least two robots guaranteed to work? A: no. 
This scenario is trivial, however a complex network of waypoints or cannons capable of targeting multiple waypoints can  make this arbitrarily complex, also given that the snow conditions cannot be foreseen, requiring to reason by cases. If ``playing'' as attackers, computing an attack plan (and for how many robots) that has chances against any number of  cannons is even more complex. 

In this paper we tackle this type of scenarios, showing how they can be modelled and solved, but also that our solution technique is powerful enough to be used to account for a number of features that cannot be included in this preliminary work, that is, the inclusion of full fledged relational database storing public and private agent data information with read and write access.

%



\section{Related Work and Contribution}
\label{sec:related}

The related work is constituted by the literature on parameterised verification  \cite{Bloem15} and more specifically verification of PMASs. 

\subsection{Parameterized verification}
The literature on parameterized verification is related but nonetheless distinct from our approach, for al least two reasons. First, we are tackling verification of parameterised \MASsystem{s},  tightly relating our decidability results with the assumed MAS execution semantics and justifying our modeling choices based on that setting. 
Second, as summarized in \cite{Bloem15}, decidability results for these systems are based on reduction to finite-state model checking via abstraction \cite{Pnueli02,john2012counter}, \emph{cutoff} computations (i.e. a bound on the number of instances that need to be verified \cite{Emerson03}) or by proving that they can be represented as well-structured transition systems \cite{Finkel01,Abdulla96}.  
However, our technique is not based on (predicate and counter) abstractions, cutoffs or reductions to finite-state model checking.    
Instead, our theoretical results are based on the model-theoretic framework of ABS \cite{lmcs,MSCS20} and can be seen as a declarative, first-order counterpart of theories of well-structured transition systems for which compatible results are known in the community (see, e.g., \cite{Abdulla96,Bloem15}).  
%
Finally, as we argue in the next section, the application of our results is novel, effective and yields decidability results of direct and immediate applicability to a clear class of \MASsystem{s}. 

\subsection{PMAS verification}
\label{sec:related_pmas}

Regarding verification of \MASsystem{s}, the closest related work is that on parameterised MAS \cite{KouvarosAIJ16,KouvarosAAAI17,BelardinelliIJCAI17} and open MAS  \cite{ConduracheMG19,KouvarosLPP19}.  
In \cite{ConduracheMG19,KouvarosLPP19}, the authors study MASs where agents can join and leave dynamically. As in our work, agents are characterised by a type and their number is not bounded. Types in \cite{ConduracheMG19} are  akin to ours, although their evolution is action-nondeterministic due on observations. 
%
\cite{ConduracheMG19} adopts synchronous composition operations over automata on infinite words 
and their procedure can  verify strategic abilities for LTL goals by reduction to synthesis. 
Notwithstanding the fact that we only look at safety (reachability), a mechanism for joining/leaving the system can be captured natively in our formalization of \MASsystem.  
A similar framework is in \cite{KouvarosLPP19}, sharing the same model of 
\cite{KouvarosAIJ16} and related papers,  
with agent templates similar to those considered here.   
Compared with that work, we restrict to safety checking instead of considering the more general task of model checking specifications in modal logics or strategic abilities. Safety checking (and, conversely, reachability analysis) is a crucial task with a long tradition in AI and in the field of reasoning about actions.  
In what follows, we note some comparison points, highlighting the value and novelty of our approach for checking safety.  

$\bullet$ \textbf{In the theory.}
In this paper we present a  verification technique based on an SMT \cite{BarrettT18} approach for ABS  \cite{ijcar08,lmcs,FI,fmsd,CGGMR19}, characterising its soundness and completeness. 
%
This is a very well-understood SMT-based theory for which a number of results of practical applicability already exist, and research is active \cite{lmcs,CGGMR19,GhilardiR10,CGGMR18,cade19,IJCAR20,IJCAR20-ext,MSCS20,BPM19,BPM20,ARCADE}. 
This is the first paper to establish a formal connection between verification of \MASsystem{s} and the long-standing tradition of SMT-based model checking for ABS. 
%
Also,  leveraging SMT-techniques makes the framework directly extendible in multiple directions. For example, we only used the empty theory and EUF (the theory of uninterpreted symbols): these theories are customary in the SMT literature, and are sufficient for the scope of this paper, since here we deal with first-order relations that are \emph{uninterpreted} (i.e. not interpreted in \emph{any} specific domain). 
Therefore, more involved theories are not used, but this possibility is readily available thanks to our work: indeed, our formalism can be immediately adapted to the case of richer theories that can be employed to impose additional constraints on the value domains of interest. 
We can easily introduce theories constraining agent data. E.g.,  elements can be retrieved from relational databases (both shared or private) with constraints such as key dependencies, in the line with~\cite{CGGMR19,BPM19,MSCS20}. On this, it is our aim to combine this framework with the RAS formalism in~\cite{CGGMR19,MSCS20}, in order to equip a \MASsystem with relational data read/written by agents.  
Adding theories, data-aware extensions, restricted arithmetics, cardinality constraints, are all now concretely usable directions for  checking safety of \MASsystem{s}. 

$\bullet$ \textbf{In the execution semantics.}
In \cite{KouvarosAIJ16}, authors consider \myi asynchronous actions and synchronization  between the environment and \myii  one agent, \myiii all agents of a given template, \myiv all agents, \myv two agents of different templates. 
 All these are possible in our framework \emph{as it is}, although we explicitly describe only \myi, \myii, \myiii.  Further execution semantics can be trivially reconstructed, based on these.

$\bullet$ \textbf{In the decidability guarantees.} 
Even though our objective \emph{is not} that of reconstructing known frameworks,  it is useful to note that our results are compatible but not easily comparable with the literature on \MASsystem{s} verification. 
The results in \cite{KouvarosAIJ16} depend on the combinations of actions as above: it is decidable only for \MASsystem where synchronization actions as in \myi, \myiii, \myiv are used (called SFE), while decidability depends in different ways on the existence of a cutoff 
for \MASsystem{s} with either \myi, \myii, \myv (called SMR) or \myi, \myii, \myiv (called SGS). 
The work in \cite{KouvarosAAAI17} extends the cutoff results to the infinite state templates for SGS  (while, to the best of our knowledge, the same extension is not available for SMR). 
As our theory is not restricted in any way to finite dimensions, our results extend to that setting as well (see point ``In the data dimension" below). For SMR and SGS their procedure requires to check the existence of a cutoff; if it exists, the outcome is correct, otherwise the procedure halts with no result. However, the existence of a cutoff depends on the existence of a simulation property (between the agent templates and the environment) to be checked on the abstracted system, which has to be computed first. 
This implies (see Def.~\ref{def:properties}), that in the general case their approach is sound, not complete and terminates, 
and is also complete for a given class of \MASsystem{s} if the existence of cutoffs is guaranteed for that class (such as SFE, but not the whole SMR and SGS).   

Conversely, our technique does not require cutoffs nor any notion alike: we can directly prove soundness and completeness 
for SMR and SGS 
(see Thm.~\ref{thm:sound-complete}), while termination  (thus a complete decision procedure) can be directly guaranteed by a \emph{syntactic} property of the action guards and of the goal formula, which we call \emph{locality} in Section~\ref{sec:termination}. 
%
%
Hence for SMR they cannot provide guarantees without a simulation test or when its result is false, while we can directly characterise the class of problems for which ours is a decision procedure. 
%

%

%
%
%
%

$\bullet$ \textbf{In the data dimension.}
Although we assume finite local states and actions (as in \cite{KouvarosAIJ16} but not in \cite{BelardinelliIJCAI17}), our aim is a clear separation between the sources of infiniteness: 
our theory allows for a data-aware extension for which solid results in the verification literature on ABS exist \cite{CGGMR19,MSCS20}.  
This allows finite action signatures with infinite number of possible parameter values, and also to write infinite data values on a read/write data storage. 
This will give agents infinite possible contexts and action instances,  in a framework natively supporting a data dimension.  
%
%

$\bullet$ \textbf{In the implementation.} 
Finally, a \emph{general purpose} model checker (MCMT \cite{GhilardiR10}) is available for checking safety of \MASsystem{s}: our contribution is directly operational and only requires a textual counterpart of the encoding we propose in Sec.~\ref{sec:encoding_async} (see Sec.~\ref{sec:implem}). Such textual representation, representing a \MASsystem as an array-based system, can be directly given as input file to MCMT. Moreover, since both the modelling of \MASsystem{s} and their translations as textual files for MCMT proves to be particularly laborious, we have made available an intuitive web tool that serves as user interface for modelling and translating \MASsystem{s} into array-based systems, using the syntax required by MCMT. Such tool, called \TOOLtitle, is illustrated in Section~\ref{sec:tool} and is available at \cite{SAFE}.



\section{\MASsystem{s}: parameterised MAS}
\label{sec:mas}

In this section we introduce our representation of \MASsystem{s} and two alternative execution semantics. 

We consider a set $\Types$ of (semantic) data types, used for variables. Each type $\type \in \Types$ comes with a (possibly infinite) domain $\Delta_{\type}$, 
and a type-wise equality operator $=_\type$. For instance, types are reals, integers, booleans, etc.
We simply write $=$ when the type is clear. 
We also consider a set $\Rels$ of relations  over types in $\Types$, which we treat as \emph{uninterpreted} relations (i.e. simple relation symbols). 
These relations are used to model background information in the MAS but \emph{are never updated during its execution}, constituting a \emph{read-only} component. E.g., the snow condition in the scenario can be modelled via these relations, as we will show. 
We consider the usual notion of FO interpretations $\I = (\Delta^\I,\cdot^\I)$ with $\Delta^\I = \bigcup_{\Types} \Delta_\type$ and $\cdot^\I$ is an FOL interpretation function for symbols in $\Rels$.

%
%


\begin{definition}
An \defterm{agent template} is a tuple $T = \tup{\IDS,\allowbreak  L,\allowbreak l^0,\allowbreak \Vars,\allowbreak \dom,\allowbreak \valueof,\allowbreak \ActsA,\allowbreak \ActsAE,\allowbreak P,\allowbreak \delta}$ composed of:   
\begin{compactitem}
\item an infinite set $\IDS$ of unique agent identifiers of sort \id;
\item a finite set $L$ of local states, with initial state $l^0\in L$;
\item a finite set $\Vars$ of local (i.e., internal) agent state variables;
\item a variable-type assignment $\dom : \Vars \mapsto \Types$;
\item a variable assignment $\valueof : L \times \Vars \mapsto \bigcup_{\Types} \Delta_\type$ returning their current values, with $\valueof(l,\var)\in \Delta_\type$ for $\type=\dom(\var)$; 
\item a non-empty, finite set of action symbols $\Acts \doteq \ActsA \cup \ActsAE$ (described later), with $\ActsA \cap \ActsAE = \emptyset$; 
\item a protocol function specifying the conditions under which each action is \emph{executable}. It is a function $P :\Acts \mapsto \Psi$, where $\Psi$ are \Eformula{e}, defined in the next section, that allow to ``query'' the current state of the whole MAS; 
\item a transition function $\delta : L\times \Acts \mapsto L$, describing how the local state is affected by the execution of an action $\act$:  the template moves from a state $l$ to a state $l'$ iff $\delta(l,\act)=l'$, also denoted $l\goto{\act}l'$. We assume $\delta$ to be total.
\end{compactitem}
\label{def:agent_template}
\end{definition}

An \defterm{environment template} is a special agent template $T_e$ with 
fixed identifier (i.e., $\IDS=\set{e}$): there is exactly one environment. 
%
%
Intuitively, a \defterm{(concrete) agent} is a triple composed of an agent $\id$, its template and its current local state. Analogously, a \defterm{(concrete) environment} is a pair consisting of the template $T_e$ and its current state (again, $e$ is a constant). 

\paragraph{Remark.} Please note that the templates are defined as in Definition~\ref{def:agent_template} for consistency with relevant literature and for the sake of clarity, although such an explicit finite-state representation, although clearly possible, is often impractical for real-world implementations. An equivalent representation, which however would require a more tedious formalization, is so that the transition function $\delta$ is given implicitly, rather than explicitly, by specifying \emph{pre- and post-conditions} of actions, based on the current variable assignment. This would allow to specify that an action is executable if and only if a boolean formula  on local variables, mentioning positive and negated variable conditions, is currently satisfied. Accordingly, local states would not be explicit: the set $L$ would be implicitly defined as a finite subset of possible assignments of variables. This is the representation we use in our implementation. As said, however, for the technical development that follows, the representation used in Definition~\ref{def:agent_template} is more convenient. 

\begin{example} 
We use a template $T_{\mathit{att}}$  for robots, with variables \mytexttt{loc} (enumeration [\mytexttt{init,A, B,target}]) and \mytexttt{destroyed} (boolean). The first variable is used for storing an agent's location, whereas the latter is for specifying whether the agent is destroyed and cannot move anymore. 
The actions are \mytexttt{gotoA}, \mytexttt{gotoB} and \mytexttt{gotoT} for representing the actions of moving to waypoints (from the initial location) and to the target (from either waypoint), plus additional actions $\mytexttt{blastA}$, $\mytexttt{blastB}$ representing the action of ``being destroyed'' by a shot fired at position $\mytexttt{A}$ or $\mytexttt{B}$, respectively. For instance, a transition $l\goto{\mytexttt{gotoA}} l'$ exists in $\delta$ for this template when $\valueof(l,\mytexttt{loc})=\mytexttt{init}$ and $\valueof(l,\mytexttt{destroyed})=\mytexttt{false}$, and the resulting local state $l'$ is such that $\valueof(l',\mytexttt{loc})=\mytexttt{A}$ (plus further assignments for inertia). Other actions are defined in a similar manner. Figure~\ref{fig:simple_example} depicts this template, represented as a labelled finite-state system. 
The cannon is modeled as (part of) the environment, whose template $T_e$ has actions \mytexttt{pulseA},\mytexttt{pulseB},\mytexttt{blastA},\mytexttt{blastB} and variables \mytexttt{pulse-loc} (enumeration [\mytexttt{A}, \mytexttt{B}, \mytexttt{nil}]). The former is used to store the location (waypoint  \mytexttt{A} or  \mytexttt{B}) towards which the pulse is currently directed.  

\begin{figure}[t]
\centering{
\resizebox{\columnwidth}{!}{

\begin{tikzpicture}[>=stealth', [ ->, >=stealth', auto, thick, font=\sffamily, node distance=3.5cm ]

\tikzset{place/.style={initial text=,draw=black,circle,font=\small,minimum size=5mm,inner sep=1pt}}

\begin{scope}[shift={(0cm,0cm)}]

\node[place,  initial left, label={[align=center]90:{\texttt{loc=init} \\ \texttt{destroyed=false}}}] (0) at (0,0) {};
\node[place, right=of 0, yshift=1.4cm, label={[align=center]90:{\texttt{loc=A} \\ \texttt{destroyed=false}}}] (1) {};
\node[place, right=of 0, shift={(7cm,-.4cm)}, label={[align=center]90:{\texttt{loc=B} \\ \texttt{destroyed=false}}}] (2) {};
\node[place, right=of 1, yshift=.9cm, label={[align=center]90:{\texttt{loc=target} \\ \texttt{destroyed=false}}}] (3) {};
\node[place, right=of 1, yshift=-.9cm, label={[align=center]90:{\texttt{loc=A} \\ \texttt{destroyed=true}}}] (4) {};
\node[place, right=of 2, yshift=.9cm, label={[align=center]90:{\texttt{loc=target} \\ \texttt{destroyed=false}}}] (5) {};
\node[place, right=of 2, yshift=-.9cm, label={[align=center]90:{\texttt{loc=B} \\ \texttt{destroyed=true}}}] (6) {};

\draw[->] (0) -- ++(1.8,0) |- node[] {\texttt{gotoA}}  (1);
\draw[->] (0) -- ++(1.8,0) |- node[below] {\texttt{gotoB}}  (2);
\draw[->] (1) -- ++(1.8,0) |- node[] {\texttt{gotoT}}  (3);
\draw[->] (1) -- ++(1.8,0) |- node[below] {\texttt{blastA}}  (4);
\draw[->] (2) -- ++(1.8,0) |- node[] {\texttt{gotoT}}  (5);
\draw[->] (2) -- ++(1.8,0) |- node[below] {\texttt{blastB}}  (6);

\path[->] (0) edge[loop below] node[] {\texttt{nop}} (0);
\path[->] (1) edge[loop below] node[] {\texttt{nop}} (1);
\path[->] (2) edge[loop below] node[] {\texttt{nop}} (2);
\path[->] (3) edge[loop right] node[] {\texttt{nop}} (3);
\path[->] (4) edge[loop right] node[] {\texttt{nop}} (3);
\path[->] (5) edge[loop right] node[] {\texttt{nop}} (6);
\path[->] (6) edge[loop right] node[] {\texttt{nop}} (5);

\end{scope}

\begin{scope}[shift={(19cm,1cm)}]

\node[place,  initial left, label={[align=center]90:{\texttt{pulse-loc=nil}}}] (0) at (0,0) {};
\node[place, right=of 0, yshift=1.4cm, label={[align=center]90:{\texttt{pulse-loc=A}}}] (1) {};
\node[place, right=of 0, yshift=-1.4cm, label={[align=center]270:{\texttt{pulse-loc=B}}}] (2) {};

\draw[->] (0) -- ++(1.8,0) |- node[] {\texttt{pulseA}}  (1);
\draw[->] (0) -- ++(1.8,0) |- node[below] {\texttt{pulseB}}  (2);

\path[->] (0) edge[loop below] node[align=center] {\texttt{nop} \\ \texttt{blastA}, \texttt{blastB}} (0);
\path[->] (1) edge[loop right] node[align=center, yshift=.2cm] {\texttt{nop} \\ \texttt{blastA}, \texttt{blastB}} (1);
\path[->] (2) edge[loop right] node[align=center, yshift=.2cm] {\texttt{nop} \\ \texttt{blastA}, \texttt{blastB}} (2);

\path[->] (2) edge[bend left=20] node[] {\texttt{pulseA}} (1);
\path[->] (1) edge[bend left=20] node[] {\texttt{pulseB}} (2);

\end{scope}

\end{tikzpicture}
}}
\caption{A depiction of the templates $T_{\mathit{att}}$ (left) and $T_e$ (right). The label next to each local state $l$ (represented as a state) specifies the value $\valueof(l,\var)$ of each variable $\var\in \Vars$. More transitions could be trivially included in $T_{\mathit{att}}$, but we keep this example minimal.}
\label{fig:simple_example}
\end{figure}

The snow  is captured by a binary relation on locations (e.g.  $Snow(\mytexttt{init,A})$). 
Note that relations in $\Rels$ are common to the whole MAS, and in general their interpretation is unbounded. 
Protocols are given later. 
\qed  
\end{example}

Let $\set{T_1,\ldots,T_n, T_e}$ be a set of agent (and environment) templates, with $T_t =  \tup{\IDS_t,  L_t, l^0_t,\allowbreak \Vars_t,\allowbreak \dom_t,\allowbreak \valueof_t,\allowbreak \ActsA_t,\allowbreak \ActsAE, P_t, \delta_t}$ for  $t\in\set{1,\ldots,n,e}$. We denote a concrete agent of type $T_t$, $t\in[1,n]$, and $\id$ $j$ by writing $\tup{j,l_j}_t$, and similarly we denote the concrete environment by $\tup{e,l_e}_e$. We also denote a vector of $k$ such concrete agents of type $T_t$ as $\tup{\vec{I},\vec{L}}_{t}$, where $\vec{I}\in \IDS_t^k$ and $\vec{L}\in L_t^k$ are vectors of \id{s} and local states, respectively. 
Importantly, we assume that agent \id{s} are unique 
and template variables disjoint, i.e., $\IDS_t \cap \IDS_{t'} = \emptyset$ and $\Vars_t \cap \Vars_{t'} = \emptyset$ for  $t,t'\in\set{1,\ldots,n,e}$, $t\neq t'$. 

A \defterm{\MASsystem} is a tuple $\M=\tup{\set{T_1,\ldots,T_n}, T_e, \Rels}$ with a set of $n$ agent templates, one environment template and the relations. Note that a \MASsystem specifies the initial local state of all agents for each template, but \emph{does not specify how many concrete agents exist for each template}. 
A \defterm{snapshot} 
is  a tuple $g=\tup{\set{\tup{\vec{I}_1,\vec{L}_1}_1,\ldots,\tup{\vec{I}_n,\vec{L}_n}_n},\tup{e,l_e}_e}$, which thus identifies the  number of agent instances (the size of each $\vec{I}_t$, $t\in[1,n]$, may differ). A snapshot is \emph{initial} iff all  agents are in their initial local state. Clearly, \emph{infinite possible initial snapshots exist}, since the number of concrete agents for each template is unbounded and not known a priori. 
As shorthand, we denote the local state $l_j$ of agent $\tup{j,l_j}_t$ in snapshot $g$ as $\locstateof{j}$, thus writing $\tup{j,\locstateof{j}}_t$.
%

\subsection{Agent formulae}

Here we define the \Eformula{e} used for protocols in Def.~\ref{def:agent_template} as   
%
quantifier-free formulae $\anEformula$ where $\uj$ are the free variables of sort \id, \self is a special constant used to denote the current agent, $e$ is the special \id (constant) of the concrete environment, $\uv$ are template variables (for any template). These follow the grammar: 
%
\[
\psi \doteq (\varof{j} = k) ~|~ R(\varof{j_1}_1 , \cdots,\varof{j_m}_m ) ~|~ j_1 = j_2 ~|~ \neg \psi ~|~ \psi_1 \lor \psi_2
\]
where $\var\in\Vars$, $k$ is a constant in $\Delta_\type$ for $\type=\dom(\var)$, $R$ is a relation symbol in $\Rels$ of arity $m\geq 1$ and $j, j_1,\ldots,j_m$ are either the variables of sort \id that appear in $\uj$ or the constant \self or the \id constant $e$ for the environment. The usual logical abbreviations apply. 
Intuitively, these formulas are implicitly quantified existentially over agent \id{s}. 
As we will formalize next, they allow to test (dis)equality of agent variables with respect to agent constants (\id{s}), and to check whether a tuple is in a relation (whose elements are agent variables). For instance, $(\varof{j} = k)$ informally means that there exists an agent \id $j$ so that $\var=k$ for the such agent.  
%
%
%

An \defterm{\id grounding} of a formula $\anEformula$ in $g$ is an assignment $\sigma$ which assigns each variable $j$ of sort \id in $\uj$, and the constants \self and $e$, to a concrete agent \id in $g$, denoted $\sigma(j)$ and $\sigma(\self)$. Specifically, we impose  $\sigma(e)=e$. 
Intuitively, for a formula to be true in $g$, one needs to find a suitable $\sigma$.  

\begin{definition} Given an interpretation $\I_0$, a snapshot $g$ satisfies a formula $\psi$ under $\I_0$, denoted $g\models_{\I_0}\psi$, iff there exists an \id grounding $\sigma$ of $\psi$ in $g$ such that $g,\sigma \models_{\I_0} \psi$, with:
\begin{compactitem}
\item $g,\sigma\models_{\I_0} (\varof{j} = k)$ iff $\valueof_t(\locstateof{\sigma(j)},\var)=k$, where $\var\in \Vars_t$; i.e. the concrete agent $\tup{\sigma(j),\locstateof{\sigma(j)}}_t$ is so that $\var=k$;
\item $g,\sigma\models_{\I_0} R(\varof{j_1}_1 , \cdots,\varof{j_m}_m )$ iff for \id{s} $\sigma(j_1),\ldots,\sigma(j_m)$:
$$R^{\I_0}(\valueof_{t_1}(\locstateof{\sigma(j_1)},\var_1),\allowbreak \cdots,\allowbreak \valueof_{t_m}(\locstateof{\sigma(j_m)},\var_m))$$
where  for each $i\in[1,m]$ we have $\var_i \in \Vars_{t_i}$ for some template $t_i\in\set{1,\ldots,n,e}$. Namely $R$ holds under $\I_0$ for the values of variables  $\var_1 , \ldots,\var_m$ in the local state of agents with \id $\sigma(j_1),\ldots,\sigma(j_m)$ as specified by $\sigma$;
\item $g,\sigma\models_{\I_0} (j_1 = j_2)$ iff $\sigma(j_1)=\sigma(j_2)$;
\item $g,\sigma\models_{\I_0} \neg \psi$ iff $g,\sigma \not \models_{\I_0} \psi$;
\item $g,\sigma\models_{\I_0} \psi_1 \lor \psi_2$ iff $g,\sigma\models_{\I_0}\psi_1$ or $g,\sigma\models_{\I_0}\psi_2$.
\end{compactitem}
%
\end{definition}

Note that \self is freely assigned to an agent \id: if $g$ satisfies a formula with \self, then an agent exists that can be taken as \self. 
%
%
Hence we write $g\models^j_{\I_0}\psi$, if needed, to denote that there exists $\sigma$ with $\sigma(\self)=j$ so that $g,\sigma \models_{\I_0}\psi$. 
This informally reads as \emph{$\psi$ is true in $g$ for agent with \id $j$}.  
E.g., assuming $g$ is s.t. $\var_1=6$ for agent with \id $3$, and $\var_1=5$ for agent with \id $7$, then $g\models^{3}_{\I_0} (\var_1^{[\self]}=6) \land (\var_1^{[j]}=5)$.


\begin{example}\emph{(cont.d)} In the running example, the program of the cannon is so that the cannon can fire on a waypoints only if there is at least on attacker (not already destroyed) in that location. Hence, we have $P_{e}(\mytexttt{blastA}) = (\mytexttt{loc}^{[j]}=\mytexttt{A}) \land (\mytexttt{destroyed}^{[j]}=\mytexttt{false})$ (recall that $\Vars_t\cap\Vars_{t'}=\emptyset$ for $t\neq t'$, hence given a variable we know to which template it belongs). Moreover, only positions that are clear of snow can be accessed: $P_\mathit{att}(\mytexttt{gotoA}) = (\mytexttt{pulse-loc}^{[e]}\neq\mytexttt{A}) \land \neg Snow(\mytexttt{init},\mytexttt{A})$, which imposes that the there is no pulse on the target location and the path to \mytexttt{A} is clear of snow. Similarly we model \mytexttt{blastB}. 
Note that we are assuming that the interpretation $\I_0$ is fixed, hence the snow condition (the relation $Snow$) is fixed and is not affected by the execution. As we are going to show in Section~\ref{sec:runs}, however, we define our safety check so as to check that a condition \emph{cannot} be reached by quantifying universally over all possible initial interpretations, so as to consider any possible snow condition. Conversely, we can check that a given condition \emph{can} in fact be reached for when assuming at least one possible initial interpretation $\I_0$. 
\end{example}

\subsection{Concurrent and \async \MASsystem{s}}

In this section we introduce the two main execution semantics, hence defining two distinct types of \MASsystem{s}, called concurrent and \async. These are distinguished by how the (single-step) transitions of the system are defined, which has to do with the types of interactions that are allowed between the agents and the environment. 

A \defterm{(global) transition} of a  \MASsystem describes the evolution of the \MASsystem when a vector of actions $\vec{\act}$ (one for each concrete agent in $g$ and one for the environment) are executed from a snapshot $g=\tup{\set{\tup{\vec{I}_1,\vec{L}_1}_1,\ldots,\tup{\vec{I}_n,\vec{L}_n}_n},\tup{e,l_e}}$, so that a new snapshot of the form $g'=\tup{\set{\tup{\vec{I}_1,\vec{L}'_1}_1,\ldots,\allowbreak \tup{\vec{I}_n,\vec{L}'_n}_n},\tup{e,l'_e}}$ is reached. This is denoted by simply writing $g\goto{\vec{\act}}g'$.

Since each concrete agent and the environment may perform an action (in $\ActsA_t\cup\ActsAE$) or remain idle, multiple executions semantics can be defined, depending on the constraints we wish to impose on the vector $\vec{\act}$. 
Let us then describe more in detail the sets $\ActsA_t$ and $\ActsAE$ introduced in Definition~\ref{def:agent_template}. 

Symbols in $\ActsA_t$, for each $t$, are called \defterm{local actions}, and those in $\ActsAE$ \defterm{synchronisation} actions.  Actions in  $\ActsA_t$ can only affect the local state of the concrete agent which executes them, whereas actions in $\ActsAE$ represent the synchronisation between one or more agents and the environment and thus can affect the local state of each agent involved. Intuitively, the synchronization actions are used to model explicit communication actions or any action with effects that are not private to the single agent or to the environment. 

As a consequence, not every vector $\vec{\act}$ is meaningful: synchronization actions in $\ActsAE$ are shared across all templates and are used to model global events that are (potentially) observable by any agent, whereas local actions in $\ActsA$ are private and can be freely executed. Typically, one wants to constrain the possible evolutions so that synchronization actions and local actions do not happen at the same time, so that we can distinguish those steps in which the \MASsystem evolves in response to public actions, events or messages from those steps in which agents update their local state in isolation.


\subsubsection{Concurrent \MASsystem{s}}
\label{sec:sync}

First, we focus on those \MASsystem{s} in which whenever a synchronization action is executed then all agents and the environment are forced to synchronize, as formalized below. Intuitively, synchronisation actions are seen as public events, affecting the local state of each concrete agent. We call the resulting class of systems  \defterm{\sync \MASsystem}. 
For capturing this execution semantics,  we adopt the following definition to characterise the global transitions that are said to be \emph{legal}.

\begin{definition}
Given $\I_0$, $g\goto{\vec{\act}}g'$ is \defterm{legal} iff: 
\begin{compactitem}
\item $\locstateof{j}[g']=\delta_t(\locstateof{j}[g],\localactof{j})$ for every $\tup{j,\locstateof{j}}_t$, $t\in\set{1,\ldots,n,e}$, i.e.,  agents and environment evolve as per their template;
\item $g\models^j_{\I_0} P_t(\localactof{j})$ for every $\tup{j,\locstateof{j}}_t$, i.e., each action is executable and  \self is substituted by $j$ when  evaluating the protocol; 
\item  either 
only local actions are executed (by agents and environment), or the environment and at least one agent synchronize with action $\act\in\ActsAE$. In the former case, agents remain idle iff there is no local executable action, and in the latter iff they cannot execute $\alpha$. Formally, either:
\begin{compactitem}
\item 
no $j$ exists so that $\localactof{j}\in \ActsAE$, and for every $\tup{j,\locstateof{j}}_t$ if $\localactof{j}=\nop$ then no $\act\in\ActsA_t$ exists with $g\models^j_{\I_0} P_t(\act)$; 
\item $\localactof{e}=\act \in \ActsAE$ and at least one   $j\neq e$ exists so that $\localactof{j}=\act$. Moreover, for every agent $\tup{j,\locstateof{j}}_t$ either \myi $g\models^j_{\I_0} P_t(\act)$ and $\localactof{j}=\act$ or \myii $g\not \models^j_{\I_0} P_t(\act)$ and $\localactof{j}=\nop$.  
\end{compactitem}
\end{compactitem}
\label{def:legal_global_transition}
\end{definition}

\noindent
Note that local and synchronisation actions cannot be mixed.

\subsubsection{Interleaved \MASsystem{s}}
\label{sec:async}

Next, we formalize \defterm{\async \MASsystem{s}}. In these systems, at each step either 
\begin{inparaenum}[\it (i)]
\item a subset of concrete agents (and the environment) perform a (non $\nop$) action in $\ActsA_t$ on their local state or  
\item the environment and \emph{a subset} of the agents synchronize by executing the same action in $\ActsAE$.  
\end{inparaenum} 
$\nop$ is a special no-op action: $\delta_t(l,\nop)=l$ for all $t,l$. 
Local and synchronization actions cannot be mixed. 
We denote by $\localactof{j}$ the action of the agent with \id $j$, or of the environment if $j=e$.

\begin{definition}
Given an interpretation $\I_0$, $g\goto{\vec{\act}}g'$ is \defterm{legal} iff: 
\begin{compactitem}
\item $\locstateof{j}[g']=\delta_t(\locstateof{j}[g],\localactof{j})$ for every $\tup{j,\locstateof{j}}_t$, $t\in\set{1,\ldots,n,e}$, i.e.,  agents and environment evolve as per their template;
\item $g\models^j_{\I_0} P_t(\localactof{j})$ for every $\tup{j,\locstateof{j}}_t$, i.e., each action is executable and  \self is replaced by $j$ for evaluating protocols; 
\item  either 
only local actions are executed (by some agents and environment), or the environment and at least one agent synchronize  with action $\act\in\ActsAE$. All other agents perform \nop. Formally, either:

\begin{compactitem}[$-$]
\item  
no $j$ exists so that $\localactof{j}\in \ActsAE$, that is, no synchronization action is executed; or 
\item the environment and at least one agent synchronize, while other agents can either synchronize as well or freely decide to remain idle. Formally, $\localactof{e}=\act\in\ActsAE$ and exists  $j\neq e$ with $\localactof{j}=\act$, and $\localactof{j}\in \set{\act,\nop}$ and $g\models^j_{\I_0} P_t(\localactof{j})$ for every  $\tup{j,\locstateof{j}}_t$. 
\end{compactitem}
\end{compactitem}
\label{def:legal_global_transition_2}
\end{definition}

\begin{example} 
In previous examples we did not comment on an important point: the actions 
\mytexttt{blastA/B} are synchronization actions (modeling the firing action and the `being hit' action of robots). 
According to the definition above, when the example is modelled as an \async \MASsystem, then a blasts is not guaranteed to destroy all targets because not all agents in location  \mytexttt{A} are forced to synchronize with such action. In fact, the two cases in which a blast destroys all agents in the location, or just a subset, are elegantly captured by simply assuming a \sync or \async semantics. 
In the former, a global transition including \mytexttt{blastA} (by $e$) is legal iff all agents execute \mytexttt{blastA} as well (recall that this encodes the ``being hit'' effects for those agents in location A, and no-effects for others). 
\label{ex:sync}
\end{example}

\subsubsection{Runs of \MASsystem{s} and the Reachability Task}
\label{sec:runs}

Based on the one-step definition of (legal) global transition, we now define the notion of runs for \sync and \async \MASsystem{s}. 
Given a \MASsystem $\M=\tup{\set{T_1,\ldots,T_n},T_e, \Rels}$, a \defterm{(global) run} is a pair $\tup{\rho,\I_0}$ where $\rho$ is a sequence $\rho=g^0\goto{\vec{\act}^1}g^1\goto{\vec{\act}^2}\cdots$ and $\I_0$ is an interpretation  for relation symbols as before.   
We restrict to runs that \myi  are legal and \myii start from an initial snapshot, i.e., with all concrete agents in their initial local state.  
A global transition as above specifies how each concrete agent $\locstateof{j}$ evolves depending on the nature of the action $\localactof{j}$. 
As stated before, once fixed at the start of $\rho$, $\I_0$ \emph{does not change} and is used at each step for evaluating formulae. 

\begin{definition}
An \Eformula  $\goal$ is \defterm{reachable} in $\M$ iff there exists an initial snapshot $g^0$ of $\M$ s.t. a snapshot $g$ with $g\models_{\I_0}\goal$  is reachable through a run $\tup{\rho,\I_0}$ from $g^0$.
\label{def:reachable}
\end{definition}

The verification task at hand is to assess whether $\goal$ is reachable, i.e. $\M$ is \defterm{unsafe} w.r.t. $\goal$. If a formula is unreachable then it is so for any number of agents and all possible interpretations. In such a cases, $\M$ is said to be \defterm{safe} w.r.t. $\goal$.


\section{\MASsystem as Array-based Systems}
\label{sec:mas-abs}

``Array-based Systems'' (ABS) \cite{lmcs} is a generic term used to refer to
\emph{infinite state transition systems} implicitly specified using a declarative,
logic-based formalism  in which arrays are manipulated 
via logical formulae. 
Intuitively, they describe a system that, starting from an initial configuration (specified by an  initial formula), is progressed through transitions (specified by transition formulae). 
The precise definition depends on the specific application. 
They are described using a multi-sorted
theory with one kind of sorts for the indexes of
arrays and  another for the elements stored therein. The content of an array is unbounded and updated during the evolution. 
Since the content of an array changes over time, it is referred to by a second-order function variable, whose interpretation in a state is that of a total function mapping indexes to elements (so that applying the function to an index denotes the classical \emph{read} array operation). 
We adopt the usual FO syntactic notions of signature, term, atom, (ground) formula, etc.

The definition, for each \emph{array variable} $a$, requires a formula $\iota(a)$ describing its \emph{initial configuration}, and a formula $\tau(a,a')$ describing a \emph{transition} that transforms the content of the array from $a$ to $a'$. Verifying whether the system can reach configurations described by a formula $Z(a)$ amounts to checking whether $\iota(a_0)\wedge \tau(a_0, a_1) \wedge \cdots \wedge \tau(a_{m-1}, a_m)\wedge Z(a_m)$ is satisfiable for some $m$. In the literature, states described by $Z(a)$ are called \emph{unsafe} (typically capturing an unwanted condition), so we preserve this terminology.   

In order to introduce verification problems in the symbolic setting of
ABS, one first has to specify the FO theories $T_{Ind}$ and $T$ (equipped with FO signatures $\Sigma_{Ind}$ and $\Sigma$, resp.) for \emph{array indexes} and for the  \emph{array elements}. In this paper, $T_{Ind}$ will be the empty theory where $\Sigma_{Ind}$ only contains  equality, and $T$ will be EUF (the theory of uninterpreted symbols), i.e., the empty theory with signature $\Sigma$ containing constants and relation symbols. 
This is a standard, common setting in the ABS literature and in the SMT community.  
%
%

Secondly, we have to describe the formulae used to
represent three main components: 
\begin{inparaenum}[\it (i)]
\item the sets of states of the system; 
\item its initialization, describing (the set of) initial states; 
\item the system evolution, capturing state transformations.
\end{inparaenum}

We denote by $\uz$ a tuple
$\tup{z_1,\ldots,z_m}$ and by 
%
%
$\phi(\ux,\ua)$ 
the formula with $\ux$ as free individual variables and $\ua$ as free array variables. 
%
Let $\Sigma:=\tup{\cS,Rel, C}$ be a multi-sorted FO signature with sorts $\cS$, relation symbols $Rel$ and constants $C$, and let $\absVar$ be a set of variables. 
In the following we use the notation 
$ F(\ux) ~:=~ \mathtt{case~of}~ \{\kappa_1(\ux):t_1;\cdots \kappa_n(\ux):t_n\}$ (where $\kappa_i(\ux)$ are quantifier-free $\Sigma$-formulae and $t_i$ are generic terms), or, equivalently, nested if-then-else expressions: we call one such $F$ \emph{case-defined function}. We also use $\lambda$-abstractions  like $b = \lambda j. F(j,\uz)$ in place of $\forall j.~b(j)=F(j,\uz)$, where typically $F$ is a case-defined function or a constant assignment. 
%
Intuitively, this allows to specify assignments via case-defined functions (if-then-else) or "bulk" assignments of all cells of an array. %
%
Hence, we consider three types of formulae:

\noindent
$\bullet$ 
An \emph{initial formula} $\iota(\ux,\ua)$ 
initialises individual and array variables via assignments and $\lambda$-abstractions: 
$ 
  \textstyle
  (\bigwedge_{i=1}^m x_i= c_i) \land
  (\bigwedge_{i=1}^k a_i =\lambda j. d_i),
$ 
with $c_i$,$d_i$ constants from $C$ in $\Sigma$;  for simplicity we assume an \textit{undef} constant by which we can make the initial state fully specified (as for a \MASsystem), but this can be relaxed;

\noindent
$\bullet$ 
A \emph{state formula} of the form  
$
  \exists \uj\, \phi(\uj, \ux,\ua)
$ 
specifies conditions on variables, 
where $\phi$ is a quantifier-free $\Sigma$-formula and  $\uj$ are individual variables of
the index sort;

\noindent
$\bullet$ 
A \emph{transition formula} $\hat\tau$ relates current and new (primed) values of individual and array variables:  
\begin{equation*}\label{eq:trans1}
  \textstyle
  \exists \ue\,(
    \gamma(\ue,\ux,\ua)
    \land (\bigwedge_{i=1}^m x'_i:= c)
    \land (\bigwedge_{i=1}^k a'_i=\lambda j. F_i(j,\ue,\ux,\ua))
    )
\end{equation*}
where $\ue$ are individual variables (of \emph{both} element and index
sorts); $\gamma$ (the `guard') is a quantifier-free $\Sigma$-formula (or a  formula quantifying universally over indexes); 
$\ux'$ and $\ua'$ are renamed
copies of $\ux$ and $\ua$; $c$ is a constant from $C$ in $\Sigma$ and $F_j$ (the ``conditional update'') is a case-defined
function.

We now give a general definition of array-based systems, one that helps us narrow the scope and consider the kind that is suitable for our purposes (e.g. having the notion of action), in place of a generic notion of array-based system, that is extremely general. 
Then we show how a \MASsystem can be encoded as a special case of such definition (in Def.~\ref{def:mas-as-abs}). 
%
Known results on array-based systems (see, e.g., \cite{lmcs,CGGMR19}) can  suitably be adapted to this variant.

\begin{definition}
 An \defterm{abstract \ABMAS}   is a tuple:
$$\tup{\Sigma, \cS_{ind},  \absglobvars, \uarr_{\absVar}, \arracts, \iota_a,  \tau_a}$$
\noindent
\begin{inparaenum}[\it (i)]
\item $\Sigma:=\tup{\cS,Rel, C}$ is a multi-sorted FO signature as above, such that there exists a specific sort $\absActs\in \cS$ called `actions sort' ; 
 \item $\cS_{ind}$ is a set of sorts of index type; 
 \item $\absglobvars$ is a set of individual variables (containing the global variables encoding the states of the environment);
   \item $\uarr_{\absVar}$ is a set of arrays, one for each variable $x\in\absVar$;
   \item $\arracts$ is a set of arrays, with codomain of type 
   $\absActs$; 
  \item $\iota_a$ is an initial formula, whose individual variables are $\absglobvars$ and whose array variables are $\uarr_{\absVar}$ and $\arracts$;
  \item $\tau_a$ is a disjunction of transition formulae, with individual and array variables $\absglobvars, \absglobvars'$ and $\uarr_{\absVar}, \arracts , \uarr_{\absVar}', \arracts'$, resp.
  \end{inparaenum}
  \label{def:abstract-ABMAS}
\end{definition}

Formally, a FO interpretation of $\Sigma$  can be thought as an \emph{instance} of the \emph{`elements' domain} of an abstract \ABMAS, the individual variables are assigned to values taken from this interpretation, $\absActs$ is interpreted over a finite set of elements called `actions' and the sorts $\cS_{ind}$ are interpreted over disjoint sets of concrete indexes. The array variables are assigned to  functions from these sets of indexes to the instance of the elements domain.


%
In what follows, we show how a specific abstract \ABMAS (simply called \ABMAS) can be used to model a \MASsystem as in Section~\ref{sec:mas}. To this end, we consider the different sorts and relations as in that section, and we encode the set of agent and environment templates. Instead of the abstract set $\uarr_{\absVar}$ of arrays, for each template $T_t$ with $t\in\set{1,\ldots,n,e}$ we consider a set $\uarr_{\Vars_t}$ of arrays, one for each variable in $\Vars_t$, that is used to store the current value of that variable for each concrete agent of type $t$. Intuitively, the `cell' for index $j$ 
in the array for variable $\var\in\Vars_t$ of template $T_t$ holds the value of $\var$ for the concrete agent with \id (in correspondence to) $j$ in the current global state (see Fig.~\ref{fig:encoding}). Since only one concrete environment exists, instead of arrays $\uarr_{\Vars_e}$ we use individual global variables $\env_{\Vars_e}$. Accordingly, the set  $\absActs$ of  generic actions is now $\cup_t \ActsA_t \cup \ActsAE$. 

Additional global variables $\ux$, which we now denote by $\globvars$ for readability, are needed for book-keeping (that is, to model any required low-level detail in the \MASsystem that are needed to encode its execution, such as flags, counters, turn indicators, etc). The global variable $\phase$, discussed in the next section, is an example of such variables. 

\begin{definition}
Given a \MASsystem $\cM:=\tup{\set{T_1,\ldots,T_n}, T_e,\Rels}$ and a set of initial states, an \defterm{\ABMAS} is a tuple: 
$$\tup{\Sigma,  \{\cS_{\IDS_t}\}_{t\in[1,n]}, \globvars, \{\uarr_{\Vars_t}\}_{t\in[1,n]}, \{ \arr_{\Acts_t}\}_{t\in[1,n]}, \iota,  \tau}$$
\noindent
\begin{inparaenum}[\it (i)]
\item $\Sigma:=\tup{\cS,\Rels, C}$, where $\cS$ are sorts (including the `actions' sorts $\cS_{\ActsA_t}$ for every $t$ and $\cS_{\ActsAE}$), $\Rels$ are the relation symbols of $\cM$  and $C$ a set of constants ($C$ includes a constant for each value $\valueof_t(l,\var)$ for every $t, l$ and $\var$);
\item   $\{\cS_{\IDS_t}\}_{t\in[1,n]}$ is a set of sorts of indexes type, one in correspondence of each $\IDS_t$.
\item $ \globvars$ is a set of individual variables used to encode the local state of the concrete environment  plus any book-keeping info (see later);
   \item $\{\uarr_{\Vars_t}\}_{t\in[1,n]}$ is a set of sets of arrays, one array for each variable $\var\in\Vars_t$ of each agent template $T_t$, whose elements range over $\Delta_\type$ for $\type=\dom(\var)$;
   \item  $\{ \arr_{\Acts_t}\}_{t\in[1,n]}$ is a set of arrays, one for agent template $T_t$, whose codomain has type 
   $\cS_{\ActsA_t}$ or $\cS_{\ActsAE}$;
  \item $\iota$ is an initial formula, whose individual variables are $ \globvars$ and whose array variables are  $ \uarr_{\Vars_t}, \arr_{\Acts_t}$;
  \item $\tau$ is a disjunction of transition formulae, with  individual variables $ \globvars, \globvars'$, and whose array variables are  $\uarr_{\Vars_t},  \arr_{\Acts_t},\uarr_{\Vars_t}',  \arr_{\Acts_t}'$.
  \end{inparaenum}
  \label{def:mas-as-abs}
\end{definition}

\begin{figure}[t]
\begin{center}
\resizebox{1\columnwidth}{!}{
 \begin{tikzpicture}

   \foreach \j in {1,2} {

  \begin{scope}[xshift=\j*5.3cm]
  
  \draw [decorate,decoration={brace,amplitude=5pt,raise=4ex}]
  (.8,-.2) -- (5,-.2) node[above,midway,yshift=2.2em]{$T_\j$};

  \foreach \x in {1,...,4} {
   \ifnum\x=2
   \node at (\x*1.2,-.3) {$\cdots$};
   \else 
   \ifnum\x=3
   \node at (\x*1.2,.1) {$\arr_{\var_{k\j}^{\j}}$};
   \else
   \ifnum\x=4
   \node at (\x*1.2,.1) {$\arr_{\Acts_\j}$};
   \else \node at (\x*1.2,.1) {$\arr_{\var_1^\j}$};
    \fi
    \fi
   \fi
   
   \foreach \y\lbl in {1,...,3} {
   \ifnum\x=2

   \else \ifnum\y=3
	\node (n\x\y) [minimum width=.6cm, minimum height=.25cm] at (\x*1.2,-\y*.26) {$\small{\cdots}$};
   \else 
       \node (n\x\y) [draw, minimum width=.6cm, minimum height=.25cm] at (\x*1.2,-\y*.25) {};
     \fi\fi
   }
   
  }

      \ifnum\j=1
  	\node (j) at (-.3,-.45) {index $j$ $\rightarrow$};
	\node (j1) at (-.5,-.15) {$\cdots$};
	\node (j2) at (-.5,-.75) {$\cdots$};
	
	\begin{pgfonlayer}{background}
        \node[ fill=red!20, inner sep=0pt, minimum height=.25cm, fit=(n12) (n32)] (circle-bg) {};
    \end{pgfonlayer}

	 \fi
  \end{scope}
    }

  \begin{scope}[xshift=3*5.3cm]
  
  \draw [decorate,decoration={brace,amplitude=5pt,raise=4ex}]
  (.8,-.2) -- (5,-.2) node[above,midway,yshift=2.2em]{$T_e$};

  \foreach \x in {1,...,4} {
   \ifnum\x=2
   \node at (\x*1.2,-.3) {$\cdots$};
   \else 
   \ifnum\x=3
   \node at (\x*1.2,.1) {$\env_{\var_{ke}}$};
   \else
   \ifnum\x=4
   \node at (\x*1.2,.1) {$\env_{\mathit{act}}$};
   \else \node at (\x*1.2,.1) {$\env_{\var_1}$};
    \fi
    \fi
   \fi
   
   \foreach \y\lbl in {1,...,1} {
   \ifnum\x=2

    \else 
       \node (n\x\y) [draw, minimum width=.6cm, minimum height=.25cm] at (\x*1.2,-\y*.25) {};
    \fi
   }
  }
  \end{scope}


 \end{tikzpicture}
 }
 \caption{A depiction of the encoding of agent templates.}
 \label{fig:encoding}
 \end{center}
 \end{figure}

Analogously to what done for abstract \ABMAS, we call a \emph{model} of an \ABMAS a FO interpretation of $\Sigma$ accounting for the `elements' domain, equipped with an assignment of the individual variables to elements of that interpretation; the action sorts $\cS_{\ActsA_t}$ and  $\cS_{\ActsAE}$ are resp. interpreted over the disjoint sets  $\ActsA_t$ and $\ActsAE$; the index sorts $\{\cS_{\IDS_t}\}_{t\in[1,n]}$ are interpreted over the disjoint sets of concrete agents \id{s} $\IDS_t$, for every $t\in[1,n]$. The array variables are assigned to  functions from these sets of \id{s}  to the elements domain of the model. In a model of an \ABMAS the standard notion of validity of formulae (like $\iota$ or $\tau$) is defined.

We now turn to show in details the encoding of a \MASsystem into an \ABMAS as just defined. 
%
Given the above, the initial formula is trivial to write (it has the very same shape given before). 
For transition formulae, the encoding differs (slightly) depending on the execution semantics assumed (see Section~\ref{sec:sync} and Section~\ref{sec:async}). Hence we talk about \sync and \async \ABMAS. 

The way in which the templates themselves are encoded is similar for both cases: as shown in Figure~\ref{fig:encoding}, for each $t\in[1,n]$ there are $k_t$ arrays for local variables $\set{\var_1^t,\cdots,\var_{k_t}^t}=\Vars_t$ plus one array storing the current chosen action (in $\ActsAE\cup\ActsA_t$) or $\nop$ as default value. The concrete environment is instead modelled with (a subset of the) global variables: there is one global variable for each template variable and one action variable storing the current synchronisation action in $\ActsAE$ (or $\nop$ as default value) 
 The transition formula $\tau$ in Def.~\ref{def:mas-as-abs} is obtained as the disjunction of simpler transition formulae $\hat\tau$. We start with the \async case.


\subsection{Encoding \async \MASsystem}
\label{sec:encoding_async}

Each global transition of the \MASsystem is encoded as a sequence of `steps' of the \ABMAS, each specified by a disjunction of transition formulae,  ordered by means of an additional global variable $\phase$ used to guide the progression. This is intuitively shown in Figure~\ref{fig:phases}, where nodes correspond to phases and edges to (disjunctions of) transition formulae. 
There are two kinds of progressions, corresponding to the two semantics in Def.~\ref{def:legal_global_transition_2}: either some non-empty subset of concrete agents execute local actions on their local state (upper branch) or a synchronization action is performed by the environment and at least one concrete agent (lower branch). The former case is realised by a non-empty sequence of steps following formula (\ref{eq:Local0_async}), in which local actions are written (`declared') in the appropriate position $j$ of array $\arr_{\Acts_t}$ for some $t$, followed by a single step in which the local state of all concrete agents that declared an action is updated in bulk (by applying the function in the $\lambda$-abstraction) as in (\ref{eq:Local1_async}). As transitions ($\hat\tau$ formulae) are taken nondeterministically, this will capture all possible sequences of (\ref{eq:Local0_async})-steps followed by a (\ref{eq:Local1_async})-step. The latter case is analogous: the formula in (\ref{eq:Global0_async}) makes sure that the environment and at least one concrete agent with \id $j$ and type $t$ can execute a synchronization action, which is then written in a global variable $\env_{\mathit{act}}$ as well as in the array position $\arr_{\Acts_t}[j]$. Then, a number of concrete agents can declare the same action, updating their action array as specified by (\ref{eq:Global1_async}). Nondeterministically, a bulk update is performed as in formula (\ref{eq:Global2_async}), which also updates the environment. In both cases, when the initial phase is reached again, the action arrays $\arr_{\Acts_t}$ of each template $T_t$ are reset to contain $\nop$ values. As a result a possible evolution of an \ABMAS template corresponds to a possible path in this intuitive diagram. 

\begin{figure}
\centering
\resizebox{.4\columnwidth}{!}{
\begin{tikzpicture}[state/.style=draw,circle,->,>=stealth,shorten >=1pt,auto,semithick,minimum size=0cm,inner sep=1pt]

\scriptsize
 
\begin{scope}[shift={(0cm,0cm)}]
\node[state, initial left] (A) {$0$};
\node[state] (B) at (2,.8) {$L$};
\node[state] (C) at (2,-.8) {$S$};

\draw[rounded corners=3pt] (A) |- node[left] {$(\ref{eq:Local0_async})$}  (B);
\draw[rounded corners=3pt] (A) |- node[left] {$(\ref{eq:Global0_async})$}  (C);

\path[->] 
(B) edge[] node [right,yshift=-.1cm] {$(\ref{eq:Local1_async})$}  (A)
(B) edge[loop right] node [right] {$(\ref{eq:Local0_async})$}  (B)
(C) edge[loop right] node [right] {$(\ref{eq:Global1_async})$}  (C)
(C) edge[] node [right,yshift=.1cm] {$(\ref{eq:Global2_async})$}  (A)
;
\end{scope}

\end{tikzpicture}
}
\caption{Intuitive progression of an \async \ABMAS.}
\label{fig:phases}
\end{figure}

We now list the transition formulae, numbered as in Figure~\ref{fig:phases}.
For encoding the first step of the upper branch we use a disjunction of transition formulae as (\ref{eq:Local0_async}) below, for each $t\in\set{1,\ldots,n,e}$ and local action $\act\in\ActsA_t$  
%

%
%
%
%
\begin{equation}\label{eq:Local0_async}
\small 
\begin{pmatrix}
\phase=0 \lor{}\\ \phase=L
\end{pmatrix}
\land \exists j_{\self} 
\begin{pmatrix}
 \arr_{\Acts_t}[j_{\self}]=\nop \;  \land \; \overline{P_t}(\act) \; \land{} \; \\
  \phase':=L  \; \land \; \arr_{\Acts_t}'[j_{\self}]:=\act
\end{pmatrix}
\end{equation}

\noindent
where a disjunction is used for compactness: we can write this as two distinct formulae.

\smallskip
\textit{Remark 2}. Above we denoted by $\overline{P_t}(\act)$ the transformation of the (quantifier-free) \Eformula $P_t(\act)=\anEformula$ as in Section~\ref{sec:mas} into a corresponding (quantifier-free) formula $\phi(\uj, \ux,\ua)$ for  \ABMAS{s}, with the same syntactic shape used in state-formulas. This is formally required because \Eformula{e}, e.g., make use of local template variables, whereas array-based  formulas make use of individual and array variables. Such transformation is trivial and it is not formalized further: it simply amounts to replace the variables of sort \id in $\varphi$ with elements of the sort of index types (essentially, to replace the index variables in the formula with index values from the sort used for agents in the \ABMAS formalization). 
As a special case, we impose that \self is always replaced with a special index $j_{\self}$, which is existentially quantified in the transition formula above:  there must exist an agent with index $j_{\self}$ and type $t$ that we can take as \self to evaluate $P_t(\act)$. If other index variables $j$ are present, these are existentially quantified as well.

\smallskip
The step above (see the loop on state $L$ in Fig.~\ref{fig:phases}) is repeated an unbounded number of times, as long as a new index $j_{\self}$ exists. Then, nondeterministically, a further step can be executed, characterised by the transition formula below. Here, as in the remainder of the section, we write $\uarr_{\Vars_t}[j]=l$ as a shorthand 
to denote that, for each variable $\var\in\Vars_t$ for some $t\in\set{1,\ldots,n,e}$, $\arr_\var[j]=\valueof(l, \var)$, namely the set of arrays $\uarr_{\Vars_t}[j]$ for the concrete agent with \id $j$ of type $t$ encode the local state $l\in L_t$ (see Fig.~\ref{fig:encoding}, in red). The same for $\uarr_{\Vars_t}':=\lambda j.~ \mathtt{case~of}~ \{\dots, \kappa_i:val(l,v),\dots\}$. 
The transition formula performs a bulk update of all instances (indexes $j$) 
by applying the transition function of each $T_t$ 
(below, one case is listed for each couple of local state-action): 

\begin{equation}\label{eq:Local1_async}
\small
 \begin{array}{@{}l@{}}
\phase=L \land \phase':=0 \land \bigwedge\limits_{t\in\set{1,\ldots,n,e}}  \arr_{\Acts_t}' := \lambda j.~ \nop \land{} \\
\bigwedge\limits_{t\in\set{1,\ldots,n,e}} \uarr_{\Vars_t}':=\lambda j.\begin{pmatrix}\mathtt{case~of}~ \\
\begin{Bmatrix}
\uarr_{\Vars_t}[j]=l_1 \land \arr_{\Acts_t}[j]=\act_1:\delta_t(l_1, \act_1)\\
\cdots \\ 
\uarr_{\Vars_t}[j]=l_m \land \arr_{\Acts_t}[j]=\act_k:\delta_t(l_m, \act_k)
\end{Bmatrix}
\end{pmatrix}
\end{array}
\end{equation}
Above, for each $j$ one possible case applies. For instance, if $\act_1$ was declared and the state is $l_1$, then $\uarr_{\Vars_t}[j]=\delta_t(l_1, \act_1)$. 
%
%

For synchronization actions, for each action $\act\in\ActsAE$ and template $T_t$ we have the following formula, making sure that at least one concrete agent and the environment can perform the same action, then written in the global variable $\env_{\mathit{act}}$:

\begin{equation}\label{eq:Global0_async}
\small
\phase=0 \land \exists j_{\self} \begin{pmatrix}
 \arr_{\Acts_t}[j_{\self}]=\nop \land  \env_{\mathit{act}}=\nop \land{} \\
 \overline{P_t}(\act) \land \overline{P_e}(\act) \land{}\\
 \arr_{\Acts_t}'[j_{\self}]:=\act \land \env_{\mathit{act}}':=\act \land \phase':=S  
\end{pmatrix}
\end{equation}

Then two possibilities exist: either other concrete agents participate in the synchronization an unbounded number of times (via (\ref{eq:Global1_async})), or the bulk progression is performed by applying the formula (\ref{eq:Global2_async}) further below. Again, this is encoded as a disjunction of formulae. For each $t\in[1,n]$ and $\act \in\ActsAE$:  

%
\begin{equation}\label{eq:Global1_async}
\small
 \begin{array}{@{}l@{}}
\phase=S  \land \exists j_{\self} 
\begin{pmatrix}
\env_{\mathit{act}}=\act \land \overline{P_t}(\act) \land \arr_{\Acts_t}[j_{\self}]=\nop \\ 
  {}\land \phase':= S \land  \arr_{\Acts_t}'[j_{\self}]:=\act
\end{pmatrix}
\end{array}
\end{equation}
%
%
\begin{equation}\label{eq:Global2_async}
\small
 \begin{array}{@{}l@{}}
\phase=S \land  \env_{\mathit{act}}=\act \land \env_{\Vars_e}=l_e\land \env_{\Vars_e}':=\delta_e(l_e, \act) \\ 
{}\land  \env_{\mathit{act}}':=\nop \land \arr_{\Acts_t}' := \lambda j.~ \nop \land \phase':=0 \land{}  \\
{}\bigwedge\limits_{t\in[1,n]} \uarr_{\Vars_t}':=\lambda j.\begin{pmatrix}\mathtt{case~of}~ \\
\begin{Bmatrix}  \uarr_{\Vars_t}[j]=l_1  \land \arr_{\Acts_t}[j]=\act : \delta_t(l_1, \act)\\
\cdots \\ 
\uarr_{\Vars_t}[j]=l_m  \land \arr_{\Acts_t}[j]=\act : \delta_t(l_m, \act)\}
\end{Bmatrix}
\end{pmatrix}

\end{array}
\end{equation}

\medskip
\begin{example}\textit{(cont.d}) The running scenario can be easily encoded, assuming a \sync semantics, through the transition formulae above (see Section~\ref{sec:implem} for the actual encoding for the model checker). 
The only detail we did not discuss so far, which is needed for the scenario, is the rule that imposes that the cannon and the attacking robots alternate their moves. This is trivially encoded by means of an additional ``book-keeping", boolean, \emph{global} variable $turn$, so that the protocol of each action of both templates is augmented with an additional condition that tests such variable. Then, the value of the variable is toggled  in the bulk update, i.e., in the transitions conforming to equation (\ref{eq:Global2_async}). 
\end{example}


\subsection{Encoding \sync \MASsystem}
\label{sec:encoding_sync}

As done for the \async case, each global transition of the \MASsystem is encoded as a sequence of `steps' of the \ABMAS, each specified by a disjunction of transition formulae,  ordered by means of an additional global variable $\phase$ used to guide the progression. This is intuitively shown in Figure~\ref{fig:phases_conc}, where nodes correspond to phases and edges to (disjunctions of) transition formulae. 

As shown, there are again two kinds of progressions, corresponding to the two cases in Def.~\ref{def:legal_global_transition}. The upper cycle represent the execution of some local action (by all agents and environment that are able to do so), while all others execute $\nop$. The lower cycle encodes the case in which a synchronisation action is performed by the environment and all the concrete agents that can execute that action (but at least one), while all others execute $\nop$. 

As depicted, the former case is realised by a sequence of steps: 
\begin{compactitem}
\item when the current phase is $0$, a sequence of steps is applied. Each of these is encoded by formula (\ref{eq:Local0}), in which a local action is written in the appropriate position $j$ of array $\arr_{\Acts_t}$ for some $t$. This encodes the fact that the concrete agent of type $t$ and \id $j$ selected one action to execute. This step can be executed an arbitrary number of times, but only once for each concrete agent and for the environment;
\item when the current phase is $L$ and for \emph{every} concrete agent and the environment it is true that either the one local action was selected or that all local actions are not executable, then one further step is enabled and the overall process reaches phase $L_2$. This is captured by formula (\ref{eq:localSync1});
\item when the current phase is $L_2$, only one possible step can be taken: all concrete agents that selected an action are updated in bulk and the local action each of them selected is executed (by applying the function in the $\lambda$-abstraction) as in (\ref{eq:Local1}).
\end{compactitem}

Note that transitions ($\tau$ formulae) are taken nondeterministically, hence this will capture all possible sequences of (\ref{eq:Local0})-steps until the guard of (\ref{eq:localSync1}) is satisfied, followed by a (\ref{eq:localSync1})-step and a (\ref{eq:Local1})-step. 
The latter case (bottom cycle) is analogous: 

\begin{compactitem}
\item the formula in (\ref{eq:Global0}) makes sure that the environment and at least one concrete agent with \id $j$ and type $t$ can execute a synchronisation action, which is then written in a global variable $\env_{\mathit{act}}$ as well as in the array position $\arr_{\Acts_t}[j]$;
\item then, concrete agents can select the same action, updating their action array as specified by (\ref{eq:Global1});
\item in a further step, that is only enabled when for \emph{every} concrete agents it is true that either $\env_{\mathit{act}}$ is not executable or was indeed selected, the phase is progressed to $S_2$. This is captured by formula (\ref{eq:localSync2});
\item when the current phase is $S_2$, a bulk update is finally performed as in formula (\ref{eq:Global2}), which also updates the environment. 
\end{compactitem}

In both cycles above, when the initial phase $0$ is reached again, the action arrays $\arr_{\Acts_t}$ of each template $T_t$ are reset to contain $\nop$ values. As a result a possible evolution of an \ABMAS template corresponds to a possible path in this intuitive diagram, chosen nondeterministically.

\begin{figure}
\centering
\resizebox{.4\columnwidth}{!}{
\begin{tikzpicture}[state/.style=draw,circle,->,>=stealth,shorten >=1pt,auto,semithick,minimum size=0cm,inner sep=1pt]

\scriptsize
 
\begin{scope}[shift={(4cm,0cm)}]
\node[state, initial left] (A) {$0$};
\node[state] (B) at (2.4,1) {$L$};
\node[state] (C) at (2.4,-1) {$S$};

\node[state] (D) at (1.2,.5) {$L_2$};
\node[state] (E) at (1.2,-.5) {$S_2$};

\draw[rounded corners=3pt] (A) |- node[left] {$(\ref{eq:Local0})$}  (B);
\draw[rounded corners=3pt] (A) |- node[left] {$(\ref{eq:Global0})$}  (C);

\path[->] 
(D) edge[] node [above,pos=.7,yshift=-.05cm] {$(\ref{eq:Local1})$}  (A)
(B) edge[loop right] node [right] {$(\ref{eq:Local0})$}  (B)
(C) edge[loop right] node [right] {$(\ref{eq:Global1})$}  (C)
(E) edge[] node [below,pos=.7,yshift=.05cm] {$(\ref{eq:Global2})$}  (A)

(B) edge[] node [below, yshift=.13cm, pos=.3] {$(\ref{eq:localSync1})$}  (D)
(C) edge[] node [above, yshift=-.13cm, pos=.3] {$(\ref{eq:localSync2})$}  (E)
;
\end{scope}

\end{tikzpicture}
}
\caption{Intuitive progression of a \sync \ABMAS.}
\label{fig:phases_conc}
\end{figure}

We now list the transition formulae. 
For encoding the first step of the upper branch we use a disjunction of transition formulae as (\ref{eq:Local0}) below, for each $t\in\set{1,\ldots,n,e}$ and local action $\act\in\ActsA_t$  
(a disjunction is also used in (\ref{eq:Local0}) for compactness: we can write this as two distinct formulae).  
By these formulae a concrete agent declares action $\act$ (recall that primed variables denote updates): 
%
%
%
%
\begin{equation}\label{eq:Local0}
\small
\begin{pmatrix}
\phase=0 \lor{}\\ \phase=L
\end{pmatrix}
\land \exists j_{\self} 
\begin{pmatrix}
 \arr_{\Acts_t}[j_{\self}]=\nop \;  \land \; \overline{P_t}(\act) \; \land{} \; \\
  \phase':=L  \; \land \; \arr_{\Acts_t}'[j_{\self}]:=\act
\end{pmatrix}
\end{equation}
\noindent
The same remark for the \async case applies. 

The step above (see the loop on state $L$ in Figure~\ref{fig:phases}) can be repeated an unbounded number of times, as long as a new index $j_{\self}$ exists. The guard of the following formula requires that for every $j$ either the selected action is not equal to $\nop$, i.e. no action was selected, or that no local action is executable:
\begin{equation}\label{eq:localSync1}
\small
\phase=L \land \forall j \begin{pmatrix}
 {}\bigwedge\limits_{t\in \set{1,\ldots,n,e}} \arr_{\Acts_t}[j]\neq\nop \lor{} \\ {}\bigwedge\limits_{\act\in\ActsA_t}  \neg\overline{P_t}(\act) \land \phase':=L_2  
\end{pmatrix}
\end{equation}

Then a further step becomes enabled, characterised by the transition formula below. Here, as in the remainder of the section, we write $\uarr_{\Vars_t}[j]=l$ as a shorthand 
to denote that, for each variable $\var\in\Vars_t$ for some $t\in\set{1,\ldots,n,e}$, $\arr_\var[j]=\valueof(l, \var)$, namely the set of arrays $\uarr_{\Vars_t}[j]$ for the concrete agent with \id $j$ of type $t$ encode the local state $l\in L_t$ (see Figure~\ref{fig:encoding}, in red). 
 The same for $\uarr_{\Vars_t}':=\lambda j.~ \mathtt{case~of}~ \{\dots, \kappa_i:val(l,v),\dots\}$. 
The transition formula performs a bulk update of all concrete agents (indexes $j$) that already declared an action (or have $\nop$ in their cell),  depending of the type $t$ (below, one case is listed for each couple of local state and action, for each template $T_t$):
%
\begin{equation}\label{eq:Local1}
\small
 \begin{array}{@{}l@{}}
\phase=L_2 \land \phase':=0 \land \bigwedge\limits_{t\in\set{1,\ldots,n,e}}  \arr_{\Acts_t}' := \lambda j.~ \nop \land{} \\
\bigwedge\limits_{t\in\set{1,\ldots,n,e}} \uarr_{\Vars_t}':=\lambda j.\begin{pmatrix}\mathtt{case~of}~ \\
\begin{Bmatrix}
\uarr_{\Vars_t}[j]=l_1 \land \arr_{\Acts_t}[j]=\act_1:\delta_t(l_1, \act_1)\\
\cdots \\ 
\uarr_{\Vars_t}[j]=l_m \land \arr_{\Acts_t}[j]=\act_k:\delta_t(l_m, \act_k)
\end{Bmatrix}
\end{pmatrix}
\end{array}
\end{equation}
Above, for each $j$ one possible case applies. E.g., if $\act_1$ was declared and the state is $l_1$, then $\uarr_{\Vars_t}[j]=\delta_t(l_1, \act_1)$. 
%
%

For synchronization actions, for each action $\act\in\ActsAE$ and template $T_t$ we have the following formula, which makes sure that at least one concrete agent and the environment can perform the same action, which is then written in the global variable $\env_{\mathit{act}}$:
\begin{equation}\label{eq:Global0}
\phase=0 \land \exists j_{\self} \begin{pmatrix}
 \arr_{\Acts_t}[j_{\self}]=\nop \land  \env_{\mathit{act}}=\nop \land{} \\
 \overline{P_t}(\act) \land \overline{P_e}(\act) \land \arr_{\Acts_t}'[j_{\self}]:=\act \land{}\\
  \env_{\mathit{act}}':=\act \land \phase':=S  
\end{pmatrix}
\end{equation}
After the phase is updated to $S$, either other concrete agents participate in the synchronisation (transition formula (\ref{eq:Global1})) or, if no more agent can select action $\act$, the phase is first progressed to $S_2$ (formula (\ref{eq:localSync2})) and then a bulk update of local states is performed by applying the formula (\ref{eq:Global2}) further below. For each $t\in[1,n]$ and $\act \in\ActsAE$:  
\begin{equation}\label{eq:Global1}
\small
 \begin{array}{@{}l@{}}
\phase=S  \land \exists j_{\self} 
\begin{pmatrix}
\env_{\mathit{act}}=\act \land \overline{P_t}(\act) \land \arr_{\Acts_t}[j_{\self}]=\nop \\ 
  {}\land \phase':= S \land  \arr_{\Acts_t}'[j_{\self}]:=\act
\end{pmatrix}
\end{array}
\end{equation}
Then: 
\begin{equation}\label{eq:localSync2}
\small
\phase=S \land \forall j \begin{pmatrix}
 {}\bigwedge\limits_{t\in \set{1,\ldots,n,e}} \arr_{\Acts_t}[j]\neq\nop \lor{} \\  \neg\overline{P_t}(\act) \land \phase':=S_2  
\end{pmatrix}
\end{equation}
%
%
%

\noindent
Finally, a bulk update is executed, which resets the phase and applies the effect of $\act$ for each agent (that declared such action instead of $\nop$) and for the environment: 

\begin{equation}\label{eq:Global2}
\small
 \begin{array}{@{}l@{}}
\phase=S_2 \land  \env_{\mathit{act}}=\act \land \env_{\Vars_e}=l_e\land \env_{\Vars_e}':=\delta_e(l_e, \act) \\ 
{}\land  \env_{\mathit{act}}':=\nop \land \arr_{\Acts_t}' := \lambda j.~ \nop \land \phase':=0 \land{}  \\
{}\bigwedge\limits_{t\in[1,n]} \uarr_{\Vars_t}':=\lambda j.\begin{pmatrix}\mathtt{case~of}~ \\
\begin{Bmatrix}  \uarr_{\Vars_t}[j]=l_1  \land \arr_{\Acts_t}[j]=\act : \delta_t(l_1, \act)\\
\cdots \\ 
\uarr_{\Vars_t}[j]=l_m  \land \arr_{\Acts_t}[j]=\act : \delta_t(l_m, \act)\}
\end{Bmatrix}
\end{pmatrix}
\end{array}
\end{equation}

\section{Verification}
\label{sec:results}

We denote by $\ABM$ the \ABMAS obtained by encoding a \MASsystem $\M$ as in the previous section (so we will refer to \sync and \async \ABMAS). 
A \defterm{safety formula} for $\ABM$ is a state formula
$\phi$, as defined in Section~\ref{sec:mas-abs}, of the form $\exists \uj. \phi(\uj, \ux,\ua)$. These formulae are used to characterise undesired states of $\ABM$. 

By adopting a customary terminology for array-based systems, we say that $\ABM$ is \defterm{safe with
        respect to} $\phi$ if intuitively the system has no finite run leading from
$\iota$ to $\phi$.  Formally, this means that there is no interpretation $\I_0$ of relations and no possible assignment to the individual and array variables $\ux^0,\ua^0, \ldots, \ux^k, \ua^k$ such
that the formula
\begin{equation*}\label{eq:smc1}
  \begin{array}{@{}l@{}}
    \iota(\ux^0, \ua^0)
    \land \tau(\ux^0,\ua^0, \ux^1, \ua^1)
    \land \cdots \land 
     \tau(\ux^{k-1},\ua^{k-1}, \ux^k,\ua^{k})
    \land \phi(,\ux^k,\ua^{k})
  \end{array}
\end{equation*}
is valid in any model of $\ABM$. 
The safety problem for $\ABM$ is the following: 
\emph{Given a safety formula $\phi$ as before, decide whether $\ABM$ is safe with respect to $\phi$}. 
It is immediate to see that this matches Def.~\ref{def:reachable}: the \ABMAS cannot be safe w.r.t  $\phi$ if there exists an initial interpretation $\I_0$ so that a global state $g$ with $g\models_{\I_0} \psi$, where $\psi$ is an \Eformula with $\overline{\psi}=\phi$, is reachable  through a run $\tup{\rho,\I_0}$, and vice-versa (recall the description of $\bar{\cdot}$ in Remark~2).

\begin{theorem}
An \Eformula  $\goal$ is reachable in a \MASsystem $\M$ iff the corresponding \ABMAS $\ABM$ is unsafe w.r.t $\overline{\goal}$. 
\label{thm:reachable_iff_unsafe}
\end{theorem}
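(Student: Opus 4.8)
The plan is to establish a tight step-by-step correspondence between the runs of $\M$ and the runs of $\ABM$, and then to lift it to the satisfaction of $\goal$ versus $\overline{\goal}$; the biconditional then follows by chaining a handful of routine lemmas. I will carry out the argument for the \async case (Def.~\ref{def:legal_global_transition_2} together with the encoding of Section~\ref{sec:encoding_async}); the \sync case is entirely analogous, the only difference being that the additional phases $L_2, S_2$ and the universally quantified guards (\ref{eq:localSync1})--(\ref{eq:localSync2}) are what enforce, respectively, the ``idle only if no local action is executable'' and the ``all agents that can synchronize must do so'' side conditions of Def.~\ref{def:legal_global_transition}.

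First I would fix an interpretation $\I_0$ of the relation symbols $\Rels$ together with an interpretation of the index sorts $\{\cS_{\IDS_t}\}_{t}$, and set up the obvious bijection between snapshots $g$ of $\M$ over those identifiers and the configurations of $\ABM$ in which $\phase=0$, $\env_{\mathit{act}}=\nop$, every action array $\arr_{\Acts_t}$ is constantly $\nop$, and all book-keeping variables hold their designated values: the cell of $\arr_\var$ at index $j$ holds $\valueof_t(\locstateof{j}[g],\var)$ and $\env_{\Vars_e}$ holds $l_e$. Write $\widehat{g}$ for this configuration. I would then prove the \textbf{macro-step lemma}: for two such $\phase=0$ configurations $c,c'$ there is a nonempty run of $\ABM$ from $c$ to $c'$ that stays in phases $\neq 0$ strictly in between --- i.e.\ a path that leaves and returns to node $0$ in the diagram of Figure~\ref{fig:phases} --- iff $c=\widehat{g}$, $c'=\widehat{g'}$ with $g \goto{\vec{\act}} g'$ a legal global transition under $\I_0$. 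For the forward half I would unfold a legal $\vec{\act}$: the agents executing a non-$\nop$ local action ``declare'' it one at a time in arbitrary order via (\ref{eq:Local0_async})-steps, after which a single (\ref{eq:Local1_async})-step applies each $\delta_t$ in bulk and resets the action arrays; the synchronization case is symmetric, using (\ref{eq:Global0_async}) to record $\act$ in $\env_{\mathit{act}}$, (\ref{eq:Global1_async}) to let further agents join, and (\ref{eq:Global2_async}) for the bulk update together with $\env_{\Vars_e}':=\delta_e(l_e,\act)$. For the backward half I would inspect which transition disjuncts can fire from $\phase=0$ and check that the only way back to $\phase=0$ is one of these two shapes, that each guard $\overline{P_t}(\act)$ holds in $\widehat{g}$ exactly when $g\models^j_{\I_0}P_t(\act)$ --- this is Remark~2, with $\self$ instantiated to the existentially quantified index $j_\self$ --- and that the resulting declaration/update pattern is precisely a legal vector $\vec{\act}$.

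Next I would note that the local-variable arrays $\uarr_{\Vars_t}$ and the environment variables $\env_{\Vars_e}$ are never touched at an intermediate phase: they change only in the phase-returning steps (\ref{eq:Local1_async}) and (\ref{eq:Global2_async}). Hence a configuration at an intermediate phase and the $\phase=0$ configuration immediately preceding it agree on everything a state formula of the shape of $\overline{\goal}$ can mention, so up to discarding a possibly incomplete final macro-step, $\overline{\goal}$ is reachable in $\ABM$ iff it is reachable in some configuration with $\phase=0$. Combining this observation, the macro-step lemma, and an induction on the number of macro-steps, I obtain that a snapshot $g$ is reachable in $\M$ from an initial snapshot under $\I_0$ iff $\widehat{g}$ is reachable in $\ABM$ from a model of $\iota$ --- here one uses that, by the shape of $\iota$ in Def.~\ref{def:mas-as-abs}, the models of $\iota$ are exactly the encodings of the initial snapshots of $\M$, over every choice of finitely many identifiers. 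Finally I would prove the \textbf{satisfaction lemma}: $g\models_{\I_0}\goal$ iff $\overline{\goal}$ holds in $\widehat{g}$ in the model determined by $\I_0$, by structural induction on $\goal$ following the clauses of $\models_{\I_0}$ and the translation $\overline{\,\cdot\,}$ of Remark~2: an \id grounding $\sigma$ is precisely an assignment of the index variables $\uj$ (and of $j_\self$ for $\self$), the atom $\varof{j}=k$ translates to $\arr_\var[j]=k$, and $R(\varof{j_1}_1,\dots,\varof{j_m}_m)$ to $R$ applied to the matching array cells, whose truth in $\widehat{g}$ under $\I_0$ coincides with their truth in $g$ by the definition of $\widehat{g}$. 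Chaining the three lemmas gives the statement, since by Def.~\ref{def:reachable} $\goal$ is reachable in $\M$ iff some reachable snapshot satisfies it under some $\I_0$, and $\ABM$ is unsafe w.r.t.\ $\overline{\goal}$ iff some model of $\iota$ reaches along $\tau$ a configuration satisfying $\overline{\goal}$.

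I expect the macro-step lemma, and within it the backward direction, to be the main obstacle --- especially for the \sync encoding, where one must show that the universally quantified guards (\ref{eq:localSync1}) and (\ref{eq:localSync2}) become enabled only once \emph{every} agent has either committed to the unique enabled move or has no enabled move at all, and that this matches, clause by clause, the nested disjunction of Def.~\ref{def:legal_global_transition} (in particular the ``$\localactof{j}=\nop$ iff $g\not\models^j_{\I_0}P_t(\act)$'' conditions). A secondary, more bureaucratic point that nonetheless must be stated explicitly is that a model of $\ABM$ interprets each index sort over the whole infinite $\IDS_t$, whereas a snapshot has finitely many agents; since $\overline{\goal}$ is existential and every transition acts uniformly over indices, one may freely restrict attention to (or pad with) the finitely many indices actually constrained, which is what makes the bijection $g\leftrightarrow\widehat{g}$ well defined in both directions.
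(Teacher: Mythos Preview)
Your proposal is correct and considerably more thorough than what the paper itself offers. The paper does not give a proof of this theorem at all: it is stated immediately after the remark ``It is immediate to see that this matches Def.~\ref{def:reachable}'', and the only justification is the one-sentence observation that $\ABM$ is unsafe w.r.t.\ $\phi$ exactly when there exist $\I_0$ and a reachable $g$ with $g\models_{\I_0}\psi$ where $\overline\psi=\phi$, together with a pointer to Remark~2 for the translation $\overline{\,\cdot\,}$. In other words, the paper treats the encoding of Sections~\ref{sec:encoding_async}--\ref{sec:encoding_sync} as ``correct by construction'' and leaves the correspondence to the reader.

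Your decomposition into a macro-step lemma, an observation that intermediate phases do not touch the variable arrays, and a satisfaction lemma is exactly what a rigorous proof would require, and it matches the intended semantics of the encoding. Two remarks. First, your worry about infinite index sorts versus finite snapshots is well placed but resolvable in the paper's own framework: the proof of Lemma~\ref{lem:sat} explicitly restricts the interpretation of index sorts to the finitely many witnesses $\ue$, so models with finite index interpretations are admitted, and your padding/restriction argument is therefore consistent with how the paper later uses models. Second, your anticipation that the backward direction of the macro-step lemma is the real work---particularly in the \sync case, where the universally quantified guards (\ref{eq:localSync1}) and (\ref{eq:localSync2}) must be shown to enforce the ``idle iff no executable action'' clauses of Def.~\ref{def:legal_global_transition}---is accurate; the paper simply does not carry this out.
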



\begin{example}\emph{(cont.d)} Let $\goal=\exists j. (\mytexttt{loc}^{[j]}=\mytexttt{target})$: \emph{at least} one agent can reach the target location. In both \sync and \async cases, the \MASsystem is unsafe if the protocol of the cannon is arbitrary, as it may adopt a naive behavior and not defend the target location. 

	Even if we assume that the cannon behaves according to the plan we described when introducing the running scenario (see Sec.~\ref{sec:introduction}), then a \sync \MASsystem is safe. As anticipated, however, an \async \MASsystem is unsafe because it is enough for two robots (or more) to move to a waypoint, then to the target, for winning the scenario. Indeed, when two or more robots reach a waypoint and the cannon fires at them, there are no guarantees that all these robots will be destroyed. For this reason, the same conclusion applies for further goal formulae such as $\goal=\exists j_1,j_2. (\mytexttt{loc}^{[j_1]}=\mytexttt{target} \land \mytexttt{loc}^{[j_2]}=\mytexttt{target} \land j_1\neq j_2)$: a run with three robots exists. 
\end{example}


\subsection{Soundness and Completeness}
\label{sec:theorems}

The algorithm described in Figure~\ref{fig:algorithm} shows the \emph{SMT-based backward reachability procedure} (or,
\defterm{backward search}) for handling the safety problem for an \ABMAS $\ABM$.  An integral
part of the algorithm is to compute \textit{symbolic} preimages.
For that purpose,  for any $\tau(\uz,\uz')$ and $\phi(\uz)$ (where $\uz'$ are renamed copies of $\uz$),
we define $\textit{Pre}(\tau,\phi)$ as the formula
$\exists \uz'(\tau(\uz, \uz')\land \phi(\uz'))$.
The \emph{preimage} of the set of states described by a state formula
$\phi(\ux)$ is the set of states described by
$\textit{Pre}(\tau,\phi)$ (notice that, when $\tau=\bigvee\hat\tau$,
        then $\textit{Pre}(\tau,\phi)=\bigvee\textit{Pre}(\hat\tau,\phi)$). 
Backward search computes iterated preimages of a safety formula $\phi$, 
until a fixpoint is reached (in that case, $\ABM$ is \emph{safe} w.r.t. $\phi$) or until a set
intersecting the initial states (i.e., satisfying $\iota$) is found (in that case, $\ABM$ is \emph{unsafe} w.r.t. $\phi$) . 
%
%
\textit{Inclusion} (Line~\ref{algo:while}) and \textit{disjointness} 
(Line~\ref{algo:return1}) tests can be discharged via proof obligations to be handled by SMT solvers. 
The fixpoint is reached when the test in Line~\ref{algo:while} returns \textit{unsat}: the preimage of the set of the current states is included in the set of states reached by the backward search so far. 
The test at Line~\ref{algo:return1} is true when the states visited so far by the backward search (represented as the iterated application of preimages to the safety formula $\phi$) includes a possible initial state (i.e., a state satisfying $\iota$). If this is the case, then $\ABM$ is unsafe w.r.t. $\phi$. 

The procedure either does not terminate or returns a \safe/\unsafe result.  
Given a \MASsystem $\M$ and a safety formula $\goal$, a \safe (resp. \unsafe) result is \emph{correct} iff $\ABM$ is safe (resp. unsafe) w.r.t. $\goalab$. 
%


\begin{definition}
Given a \ABMAS $\ABM$ and a safety formula $\goalab$, a verification procedure for checking (un)safety is: 
\begin{inparaenum}[\it (i)]
\item \defterm{sound} if, when it terminates, it returns a correct result;
\item \defterm{partially sound} if a \safe result is always correct;
\item \defterm{complete} if, whenever \unsafe is the correct result, then \unsafe is indeed returned.
\end{inparaenum}
\label{def:properties}
\end{definition}

The same definition can be given for \MASsystem{s}. 

If partially sound, an \unsafe result may be not correct due to ``spurious''  traces. 
Also, as the preimage computation can diverge on a safe system, 
there is no guarantee of termination.  
%
However, since \ABMAS are a special case of the abstract \ABMAS (Def.~\ref{def:abstract-ABMAS}), we can adapt known results on array-based systems (see \cite{lmcs,CGGMR19}). 


\begin{theorem}\label{thm:sound-complete}
Backward search for the safety problem is \emph{sound} and \emph{complete} 
for \emph{\async} \ABMAS.
\end{theorem}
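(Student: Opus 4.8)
The plan is to obtain the theorem by instantiating the known soundness and completeness results for backward reachability in array-based systems \cite{lmcs,CGGMR19}, after checking that the \async \ABMAS of Section~\ref{sec:encoding_async} meets the syntactic hypotheses those results require. Recall that an \ABMAS (Def.~\ref{def:mas-as-abs}) is a special case of an abstract \ABMAS (Def.~\ref{def:abstract-ABMAS}), which is an array-based system whose element theory is EUF and whose index theory is the pure theory of equality, a setting covered by the cited literature. For such systems the backward search of Figure~\ref{fig:algorithm} is guaranteed sound and complete whenever: (i) both theories have decidable satisfiability for the quantifier-bounded fragments encountered along the search (for EUF over a signature of constants and relation symbols this reduces to the decidable relational Bernays--Sch\"onfinkel--Ramsey fragment, so that both the inclusion test at Line~\ref{algo:while} and the disjointness test at Line~\ref{algo:return1} are effective); and (ii) every transition formula $\hat\tau$ is \emph{existential}, i.e.\ of the shape $\exists\ue\,(\gamma\land\text{updates})$ with $\gamma$ quantifier-free and the updates given by $\lambda$-abstracted case-defined functions or bulk constant assignments, which ensures that the class of safety formulae $\exists\uj.\,\phi(\uj,\ux,\ua)$ is closed under the preimage operator $\textit{Pre}(\hat\tau,\cdot)$.

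The core of the argument is verifying (ii) for the \async encoding. I would inspect equations (\ref{eq:Local0_async})--(\ref{eq:Global2_async}) one by one and observe that each has exactly this existential form: the only quantifiers are the $\exists j_{\self}$ explicitly displayed, together with the further existential index variables introduced when an \Eformula $P_t(\act)$ is translated into the quantifier-free guard $\overline{P_t}(\act)$ (these, being implicitly existential over \id{s}, can be prenexed to the front of $\hat\tau$ alongside $\exists j_{\self}$ without leaving the existential class); the phase variable $\phase$ and the $\nop$-resets of $\arr_{\Acts_t}$ are ordinary individual and bulk assignments; and the conditional updates of $\uarr_{\Vars_t}$ are admissible case-defined functions guarded by quantifier-free conditions on the array reads at the bound index. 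Crucially, \emph{no universal quantifier occurs in any \async transition guard} --- in contrast with the \sync encoding, where formulae (\ref{eq:localSync1}) and (\ref{eq:localSync2}) are universally guarded --- which is precisely what lets the preimage-closure computation go through verbatim and is why the present theorem is stated only for the \async case. Then the standard computation shows that $\textit{Pre}(\hat\tau,\phi)$, obtained by conjoining $\hat\tau$ with the primed copy $\phi'$ and eliminating the primed variables through the updates, is again an existential formula over the index sorts, so every iterate of backward search stays inside the decidable fragment.

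With (i) and (ii) in hand, soundness and completeness follow as in the cited literature, and I would only translate the conclusions into our terminology. For soundness: each $\textit{Pre}$ step computes precisely the one-step predecessors (this is just the meaning of $\exists\uz'(\tau\land\phi')$), hence a fixpoint detected at Line~\ref{algo:while} really is the full set of backward-reachable configurations and, if it does not intersect $\iota$, the system is genuinely safe; symmetrically, an intersection with $\iota$ detected at Line~\ref{algo:return1} really exhibits an assignment to $\ux^0,\ua^0,\ldots,\ux^k,\ua^k$ satisfying $\iota\land\bigwedge_i\tau\land\phi$, i.e.\ a finite run to an unsafe state; by Theorem~\ref{thm:reachable_iff_unsafe} both outcomes match (un)reachability of $\goal$ in $\M$, so any answer returned is correct. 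For completeness (in the sense of Def.~\ref{def:properties}, which does not demand termination): if $\ABM$ is unsafe there is a finite $k$ with $\iota(\ux^0,\ua^0)\land\bigwedge_{i<k}\tau(\ux^i,\ua^i,\ux^{i+1},\ua^{i+1})\land\phi(\ux^k,\ua^k)$ satisfiable; unfolding $\textit{Pre}$ exactly $k$ times produces a formula whose conjunction with $\iota$ is satisfiable and which lies in the decidable fragment, so the test at Line~\ref{algo:return1} eventually returns \emph{sat} and the procedure outputs \unsafe. (Termination, and hence a full decision procedure, is treated separately in Section~\ref{sec:termination}.)

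The step I expect to be the main obstacle is (ii), and within it the careful handling of the translation $\overline{P_t}(\act)$: one must argue precisely that replacing \self by $j_{\self}$ and the remaining free \id-variables of the \Eformula by existentially quantified indices yields a guard that, once prenexed, leaves $\hat\tau$ in the existential fragment, and that none of the bookkeeping machinery --- the phase ordering via $\phase$, the $\nop$-resets, the ``turn'' flags --- escapes the admissible initial/transition language of Def.~\ref{def:abstract-ABMAS}. Once this is pinned down, the remainder is a direct transfer of the array-based systems results, with no new model-theoretic content.
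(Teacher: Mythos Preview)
Your proposal is correct and follows essentially the same approach as the paper: both arguments hinge on the observation that the \async transition formulae (\ref{eq:Local0_async})--(\ref{eq:Global2_async}) are purely existential (unlike the \sync ones), so that preimages of state formulae are again state formulae and the satisfiability tests in Lines~\ref{algo:while}--\ref{algo:return1} fall into a decidable fragment. The only presentational difference is that the paper, rather than invoking \cite{lmcs,CGGMR19} as a black box, re-proves the two key lemmas (preimage closure and decidability of $\exists\forall$-formulae via finite instantiation) explicitly for the \ABMAS setting in the appendix; your reduction to Bernays--Sch\"onfinkel--Ramsey is morally the same as their instantiation argument, though the paper's route is more self-contained given that arrays are function symbols and the reduction to a relational fragment requires the substitution step they spell out.
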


\begin{proof}[Proof Sketch] 
For soundness we need suitable decision procedures for the satisfiability tests 
in Alg.~\ref{fig:algorithm}. By taking inspiration from \cite{lmcs,CGGMR19}, we can show that the preimage of a state formula can be converted to a state formula and that entailment can be decided via finite instantiation techniques. For completeness, finite unsafe traces are found after finitely many steps. The full proof is reported in Appendix~\ref{sec:app1}.
\end{proof}

\begin{theorem}\label{thm:sound-only}
Backward search for the safety problem is \emph{partially sound} and \emph{complete} 
for \emph{\sync} \ABMAS. 
\end{theorem}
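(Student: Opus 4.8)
The plan is to reuse the machinery behind Theorem~\ref{thm:sound-complete} and to isolate precisely the point where the \sync encoding forces us to weaken the guarantee from full soundness to partial soundness. Recall that, for \async \ABMAS, the argument rests on two facts: \emph{(A)} the preimage $\textit{Pre}(\hat\tau,\phi)$ of a state formula $\phi$ through any of the transition formulae (\ref{eq:Local0_async})--(\ref{eq:Global2_async}) is again (effectively equivalent to) a state formula, because all those $\hat\tau$ have purely \emph{existentially} quantified guards and their array updates are case-defined bulk assignments; and \emph{(B)} the inclusion/disjointness tests of Alg.~\ref{fig:algorithm} then become entailments between state formulae, dischargeable by finite instantiation over the index sorts. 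Completeness, in turn, follows because every run of $\M$ is finite and is simulated by a bounded-length sequence of \ABMAS steps, so a reachable unsafe configuration is reached after finitely many preimage computations. For the \sync case I would keep \emph{(B)} and the completeness scheme and concentrate on what breaks in \emph{(A)}.

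\medskip
\noindent First I would pinpoint the obstruction. In the \sync encoding the ``barrier'' transition formulae (\ref{eq:localSync1}) and (\ref{eq:localSync2}), which implement the check from Def.~\ref{def:legal_global_transition} that \emph{every} concrete agent has either declared an action or is unable to execute any, contain a genuine \emph{universal} quantification $\forall j$ over an index sort. Hence the preimage of a state formula through such a $\hat\tau$ is no longer a state formula: it has the shape $\exists\ue\,\forall j\,(\cdots)$, i.e.\ it picks up an $\exists\forall$ quantifier alternation over indices, which breaks \emph{(A)} and the exactness/decidability of the loop tests. The remedy, which is the one actually implemented, is to replace such a universally quantified preimage by a \emph{state-formula over-approximation} of it (dropping, or instantiating on a finite set of indices, the $\forall j$ conjunct), so that backward search keeps operating on state formulae only. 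I would then prove this replacement is \emph{conservative upwards}: at every iteration the computed set of states contains the exact set of backward-reachable states. Partial soundness is then immediate --- if the over-approximated backward search reaches a fixpoint disjoint from $\iota$, then a fortiori the exact backward-reachable set is disjoint from $\iota$, so $\ABM$ is genuinely safe w.r.t.\ $\phi$ and a \safe answer is always correct. Full soundness fails precisely because the over-approximation may make the computed set spuriously intersect $\iota$, yielding an unsound \unsafe answer (this also accounts for the remark after Def.~\ref{def:properties} about spurious traces).

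\medskip
\noindent For completeness I would argue that the over-approximation \emph{loses nothing on the unsafe side}: every concrete predecessor of a set of states is still contained in the over-approximated preimage (we only ever add states), so any \emph{finite} unsafe trace of $\ABM$ --- equivalently, by Theorem~\ref{thm:reachable_iff_unsafe}, any witness that $\goal$ is reachable in $\M$ --- survives the backward exploration. Exactly as in the \async case, every run of a \sync \MASsystem is finite and is simulated by a bounded-length sequence of \ABMAS steps (a block of the form $(\ref{eq:Local0})^{*}(\ref{eq:localSync1})(\ref{eq:Local1})$ or of the form $(\ref{eq:Global0})(\ref{eq:Global1})^{*}(\ref{eq:localSync2})(\ref{eq:Global2})$ per global transition), so a reachable unsafe configuration is detected after finitely many iterations and \unsafe is returned; together with the finite-instantiation decidability of the (now state-formula) SMT tests inherited from the \async development, this gives completeness.

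\medskip
\noindent The main obstacle I expect is the rigorous treatment of the universal guards: one has to \emph{(i)} fix the precise over-approximation used for (\ref{eq:localSync1}) and (\ref{eq:localSync2}) inside the preimage computation and show it never discards a true predecessor while always producing a state formula, and \emph{(ii)} re-check that after this step all proof obligations of Alg.~\ref{fig:algorithm} fall back into the finitely-instantiable fragment used for Theorem~\ref{thm:sound-complete}. The remaining bookkeeping --- that the \sync phase automaton of Figure~\ref{fig:phases_conc} faithfully simulates Def.~\ref{def:legal_global_transition} and that the extra phases $L_2,S_2$ introduce no spurious runs --- is routine and would be relegated to an appendix, mirroring the structure of Appendix~\ref{sec:app1}.
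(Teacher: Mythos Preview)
Your approach is correct and matches the paper's: the paper's own proof is merely a sketch noting that the universal quantification in the barrier transitions (\ref{eq:localSync1}) and (\ref{eq:localSync2}) forces one to follow the over-approximation treatment of universally guarded transitions from~\cite{GhilUniv}, which is precisely the mechanism you spell out (replace the $\exists\forall$-preimage by a state-formula over-approximation, retain partial soundness because the computed set always contains the exact backward-reachable set, and retain completeness because over-approximation never discards a true predecessor). One minor correction: runs of a \MASsystem are not finite in general --- what you actually need, and what your argument in fact uses, is that any \emph{unsafe} configuration is reached by a finite prefix, so it is detected after finitely many preimage iterations.
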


\begin{proof}[Proof Sketch] The proof is analogous to the previous one, with a significant difference due to the presence of universal quantification in transition formulae (see Sec.~\ref{sec:encoding_sync}): for this reason, the proof should be adapted following the line of reasoning from~\cite{GhilUniv}.
\end{proof}

It follows that, given Theorem~\ref{thm:reachable_iff_unsafe}, \myi for \sync \MASsystem we have partial soundness due to the need of using universal quantification in transition formulae (see Sec.~\ref{sec:encoding_sync}), and \myii ours is a sound and complete procedure for checking (un)safety of \async \MASsystem as only existential quantification is needed (see Sec.~\ref{sec:encoding_async}). 
Backward search for \async \ABMAS is thus a \emph{semi-decision} procedure. 
%
Universal quantification proves to be crucial, as available results on verification of array-based systems \cite{lmcs,CGGMR19} cannot guarantee that in fact \emph{all} indexes are considered, leading to executions in which some indexes will be simply ``disregarded'' from that point on. 
This induces \emph{spurious} runs in the encoded \MASsystem, akin to a \emph{lossy} system \cite{lossy}. 
%

\begin{algorithm2e}[t]
\footnotesize
\SetAlgoNoEnd
\caption{backward search $\textit{BReach}(\ABM,\phi)$}
\label{fig:algorithm}
	%
	$B\longleftarrow \bot$; \\
	\While{$\phi\land \neg B$ is \emph{satisfiable}\label{algo:while}}{
		\textbf{if} ($\iota\land \phi$ is satisfiable) \textbf{then return}  $\unsafe$\label{algo:return1};\\
		$B\longleftarrow \phi\vee B$;\\  
		$\phi\longleftarrow Pre(\tau, \phi);$ ~// $\tau$ is as in Def.~\ref{def:mas-as-abs}\\
	}
\KwRet  $\safe;$
\end{algorithm2e}

\subsection{A Condition for Termination}
\label{sec:termination}

 
We show under which (sufficient) condition we can guarantee termination of the backward search, which will gives us a decision procedure for unsafety. 
%
%
Although technical proofs are quite
involved at the syntactic level, they can be intuitively understood
as based on this \emph{locality condition}: the states ``visited'' by the backward search can be represented by state-formulae which do not include direct/indirect comparisons and ``joins" of distinct state variables for different agent \id{s}. 
E.g., we cannot have $x^{[j_1]}=y^{[j_2]}$, which equates the current variable $x$ and $y$ of the concrete agents with \id{s} $j_1$ and $j_2$. We  can however write  $x^{[j_1]}=c$, $y^{[j_2]}=c$ for a constant $c$. 

Of course, if this property is true for $\phi$, it does not necessary hold for the formula obtained by ``regressing" $\phi$ w.r.t. some transition formula $\hat{\tau}$, i.e., $\textit{Pre}(\hat\tau,\phi)$ as those in Section~\ref{sec:mas-abs}:  $\hat\tau$ includes translations $\overline{P_t}(\act)$ of template protocols $P_t(\act)$ for action $\act$.   
%
%
Formally, we call a state formula \defterm{local} if it is a disjunction of the formulae of the form:
\begin{equation}\label{eq:local}
  \textstyle
  \exists j_1\cdots \exists j_m\, ( Eq(j_1,\dots, j_m)  \land
    \bigwedge_{k=1}^m \phi_k(j_k,\ux,\ua))
\end{equation}
Here, $Eq$ is a conjunction of
variable (dis)equalities, $\phi_k$ are quantifier-free formulae, and
$j_1,\ldots,j_m$ are individual variables of index sort. 
Note how 
each $\phi_k$ in~\eqref{eq:local} can contain only
the existentially quantified index variable $j_k$. 
As said before, this limitation necessarily has an impact on transition formulae as well. We say that a transition formula $\hat\tau$ is \emph{local} 
if whenever a formula $\phi$ is local, so is
$\textit{Pre}(\hat\tau,\phi)$. 

\begin{theorem}\label{thm:decid-local}
$\textit{BReach}(\ABM,\goalab)$ always \emph{terminates} if \myi the \ABMAS is \async and \myii its transition  formula $\tau=\bigvee\hat\tau$ is a disjunction of \emph{local} transition formulae and \myiii $\goalab$ is local.
\end{theorem}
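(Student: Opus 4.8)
The plan is to reduce the statement to the classical termination results for backward reachability on array-based systems, namely the well-quasi-ordering (wqo) argument from \cite{lmcs,CGGMR19}, by showing that under hypotheses \myi–\myiii the backward search only ever manipulates \emph{local} state formulae, and that the set of local state formulae, ordered by entailment (equivalently, by the induced embedding order on the "component" descriptions $\phi_k$), is a wqo. First I would record the shape invariant: the goal $\goalab$ is local by \myiii, and by \myii each $\hat\tau$ is local, so an immediate induction on the number of iterations of $\textit{BReach}$ shows every $\phi$ computed at Line~\ref{algo:return1}/Line~5 is (equivalent to) a local formula of the form~\eqref{eq:local}, since $\textit{Pre}(\tau,\phi)=\bigvee\textit{Pre}(\hat\tau,\phi)$ and a finite disjunction of local formulae is local. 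I would also note that disjointness/inclusion tests are decided exactly as in Theorem~\ref{thm:sound-complete} (finite instantiation), so the only thing left to rule out is an infinite strictly-decreasing chain of visited sets $B$.

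Next I would set up the wqo. A local formula~\eqref{eq:local} with $m$ existentials, after moving the (dis)equalities $Eq(j_1,\dots,j_m)$ into a choice of which index variables coincide, is essentially a finite multiset of "types'': each existential index $j_k$ carries the quantifier-free constraint $\phi_k(j_k,\ux,\ua)$, which — because the element theory is EUF with a \emph{finite} signature of constants and the local-state space of each template is finite — ranges over a \emph{finite} set of possible constraints (up to logical equivalence), together with the finitely many book-keeping global variables $\globvars$ (finite range) and $\phase$ (finitely many phases). Hence the configurations visited by the backward search are finite tuples $\langle$global part, multiset over a finite alphabet of index-types$\rangle$. By Dickson's lemma the global part lives in a finite set and is trivially wqo; by Higman's lemma / the wqo of finite multisets over a finite alphabet, the index-type part is wqo; a finite product of wqos is a wqo. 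The key semantic link is that entailment between local formulae coincides with (componentwise implication of the global part and) multiset embedding of the index-types — this is precisely the "model-theoretic'' content that Definition~\ref{def:abstract-ABMAS} and the adaptation of \cite{lmcs,CGGMR19} give us, and it is where locality is essential: without locality one could write joins $x^{[j_1]}=y^{[j_2]}$ and the relevant order would no longer be a wqo.

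With the wqo in hand, termination is the standard argument: the sequence of sets $B_0\subseteq B_1\subseteq\cdots$ computed by $\textit{BReach}$ is non-decreasing, and at each non-terminating iteration the new preimage is \emph{not} entailed by $B$ (that is exactly the loop guard at Line~\ref{algo:while} being satisfiable), so the upward-closed sets represented by the $B_i$ form a strictly increasing chain; a wqo admits no infinite strictly increasing chain of upward-closed sets, so the loop must exit, either via Line~\ref{algo:return1} (\unsafe) or via the guard failing (\safe). Combined with soundness and completeness from Theorem~\ref{thm:sound-complete}, this yields a decision procedure for unsafety of local \async \MASsystem{s} (via Theorem~\ref{thm:reachable_iff_unsafe}).

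The main obstacle I expect is \emph{not} the wqo bookkeeping but verifying the preservation claim in full rigour: showing that $\textit{Pre}(\hat\tau,\phi)$ is genuinely local for every $\hat\tau$ in the \async encoding of Section~\ref{sec:encoding_async}. The transition formulae~\eqref{eq:Local0_async}–\eqref{eq:Global2_async} contain the translated protocols $\overline{P_t}(\act)$ and $\lambda$-abstractions with case-defined updates; one must check that after the existential quantifier eliminations and the substitution of the $\lambda$-defined arrays, no "join'' between two distinct index variables $x^{[j_1]}=y^{[j_2]}$ is ever created, and that the bulk updates~\eqref{eq:Local1_async},~\eqref{eq:Global2_async} — which rewrite \emph{all} cells — do not secretly couple distinct indices. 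This is the genuinely delicate, syntactic part of the argument (it is what the paper itself flags as "quite involved at the syntactic level''), and it is exactly here that the \async restriction \myi matters, since the \sync encoding's universally quantified guards~\eqref{eq:localSync1},~\eqref{eq:localSync2} would break locality of preimages. I would therefore devote the bulk of the detailed proof to a case analysis over the finitely many transition-formula shapes, deferring the wqo/ascending-chain conclusion to a citation of the array-based-systems machinery.
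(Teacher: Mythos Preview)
Your overall strategy---preserve locality through the preimage computation, then invoke a wqo argument---matches the paper's, and the wqo you describe (multisets of finitely many per-index ``types'' plus a finite global part, via Dickson/Higman) is essentially the paper's argument rephrased. The paper works model-theoretically: it expands $\Sigma$ to $\tilde\Sigma=\Sigma\cup\{\ua,\ux\}$, observes there are only finitely many \emph{cyclic} $\tilde\Sigma$-structures $\cC_1,\dots,\cC_N$ up to isomorphism, associates to each model $\cM$ the counting tuple $(k_1(\cM),\dots,k_N(\cM))\in(\mathbb N\cup\{\infty\})^N$, proves a monotonicity lemma (if $k_i(\cM)\leq k_i(\cN)$ for all $i$ then every local formula true in $\cM$ is true in $\cN$), and concludes by Dickson's Lemma. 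Your ``finitely many possible constraints $\phi_k$ up to equivalence'' is the syntactic shadow of ``finitely many cyclic structures up to isomorphism'', and your multiset embedding is exactly the componentwise order on counting tuples. So the core argument is the same, just pitched at the level of formulae rather than models.

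Where you go astray is in locating the ``main obstacle''. You write that the delicate part is checking that $\textit{Pre}(\hat\tau,\phi)$ is local for the concrete transitions \eqref{eq:Local0_async}--\eqref{eq:Global2_async}. But look again at the hypothesis \myii and at the paper's definition just above the theorem: a transition formula $\hat\tau$ is \emph{called} local precisely when $\textit{Pre}(\hat\tau,\phi)$ is local whenever $\phi$ is. So locality preservation is \emph{assumed}, not proved, in Theorem~\ref{thm:decid-local}; your ``case analysis over the finitely many transition-formula shapes'' is not part of this proof at all. (It is relevant to \emph{applying} the theorem to a given \MASsystem---and the paper handles that separately, by the one-line remark that constant-assignment updates plus local protocols yield local transitions.) The paper's actual proof is therefore shorter than you anticipate: once locality of all $\phi_n$ is immediate from the hypotheses, the entire content is the wqo/Dickson step.
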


\begin{proof}[Proof Sketch]
Although involved, the proof follows a similar argument to that of Thm.~6.2 in~\cite{CGGMR19}: it can be proved that an \ABMAS corresponds to a particular case of acyclic local RASs, therein defined, but without ``data'' and \emph{extended} with uninterpreted relations. The full proof is reported in Appendix~\ref{sec:app2}.  
\end{proof}

Notice that the structure of the proof of the previous theorem follows the schema of a proof from~\cite{MSCS20}, but there is a \emph{significant} difference: the decidability result  in \cite{MSCS20} is based on locality and applies to array-based systems whose FO-signatures do \emph{not} have relational symbols, whereas here our array-based systems \emph{do} have free relational symbols. This difference is reflected in the kind of FO-formulae that are involved.
 
The locality condition is `semantic' for transition formulae and syntactic for state formulae. However, since the updates in transition formulae in Sec.~\ref{sec:mas-abs} are assignment to constants,  if all $P_t(\act)$ are local then $\tau$  is local.   

It is immediate to verify that agent protocols and the goal formula in our running scenario are both local, hence checking safety is decidable.


\section{Implementation}\label{sec:implem}

In this section we illustrate and evaluate experimentally our tool illustrate our implementation approach, called \TOOLtitle, based on the third-party model checker MCMT \cite{GhilardiR10}. 
We organize this section as follows: first, in Section~\ref{sec:mcmt} we describe the MCMT model-checker and comment on the encoding of \MASsystem{s} into MCMT input files. Then, in Section~\ref{sec:tool}, we showcase our \TOOL tool for the automated encoding of \MASsystem{s} into an \ABMAS (in the format MCMT expects) and subsequent extraction of witnesses of unsafety, and in Section~\ref{sec:evaluation} we evaluate experimentally our overall approach and tool chain.  

\subsection{MCMT: Model Checker Modulo Theories}
\label{sec:mcmt}

MCMT \cite{GhilardiR10} is a declarative and deductive symbolic model checker for safety properties of infinite state systems, based on backward reachability and fix-points computations (with calls to an SMT solver). 
%
The input of the software is a simple, textual representation of an ABS, including \myi the declaration of individual and array variables, \myii the initial state formula, \myiii the goal state formula, \myiv the list of transition formulae (i.e. either formulae (\ref{eq:Local0_async})-(\ref{eq:Global2_async}) or formulae (\ref{eq:Local0})-(\ref{eq:Global2}) in Sec.~\ref{sec:mas-abs}). It is thus  sufficient to write a one-to-one textual counterpart of a given \ABMAS{s} as input to MCMT.

If the system is unsafe, a \emph{witness} is provided, from which one can extract unsafe runs and refine the model (see Section~\ref{sec:tool} on our implementation of this step). 
MCMT supports forms of universal quantification needed for \sync \MASsystem (see Sec.~\ref{sec:encoding_sync}), in which case a warning is issued, as this may generate spurious runs (see below Theorem~\ref{thm:sound-complete}). 

\subsubsection{MCMT input files for \async and \sync \ABMAS{s}}
\label{sec:mcmt_async}

We now exemplify the textual encoding by listing the salient portions of the input file for our running examples. With the exception of minor syntactic details, such textual encoding is the same for the \async and \sync cases, hence in this paper we focus on the former case only (the difference is simply represented by which transitions formulae we must encode to MCMT, i.e., either formulae (\ref{eq:Local0_async})-(\ref{eq:Global2_async}) or formulae (\ref{eq:Local0})-(\ref{eq:Global2}) in Sec.~\ref{sec:mas-abs}). 

The textual encoding starts with some definitions, which intuitively define the sorts that are used in the ABS. For the running example, these are:

{\scriptsize
\begin{verbatim}
:smt (define-type Action)
:smt (define-type PhaseSort)
:smt (define-type BOOLE)
:smt (define-type StringLoc)
:smt (define-type turnSort)
 \end{verbatim}}
 
Intuitively, we need (distinct) sorts for actions, the phase flag used to guide the progression (see Section~\ref{sec:mas-abs}), booleans, locations (for robot positions, blasts and the pulse) and turns (as our example is turn-based, alternating the moves of the cannon the moves of all robots - see the example at the end of Section~\ref{sec:encoding_async}). 

Then, constants and relations are declared with their corresponding sort. 

{\scriptsize
\begin{verbatim}
:smt (define Snow ::(-> StringLoc StringLoc ))
:smt (define pulseA ::Action)
:smt (define pulseB ::Action)
:smt (define gotoA ::Action)
:smt (define gotoB ::Action)
:smt (define goTargetA ::Action)
:smt (define goTargetB ::Action)
:smt (define blastA ::Action)
:smt (define blastB ::Action)
:smt (define TRUE ::BOOLE)
:smt (define FALSE ::BOOLE}
:smt (define A ::StringLoc)
:smt (define B ::StringLoc)
:smt (define init ::StringLoc)
:smt (define target ::StringLoc)
:smt (define P0 ::PhaseSort)
:smt (define PL ::PhaseSort)
:smt (define PS ::PhaseSort)
:smt (define turnTEMPS ::turnSort)
:smt (define turnREST ::turnSort)
 \end{verbatim}}

After some further declarations needed by MCMT, local and global variables are declared. In our encoding, consistently with the encodings illustrated in Section~\ref{sec:mas-abs}, local variables are used for the template variables and global variables are used for both the environment variables and additional book-keeping data. 

For instance, for encoding our running example we include: 

{\scriptsize
\begin{verbatim}
:local locATT StringLoc   
:local destroyedATT BOOLE
:local actATT Action
:global pulseLoc StringLoc
:global actEnv Action
:global phase PhaseSort
 \end{verbatim}}
 
The three arrays (the local variables) are used, respectively, to hold the location of robots, their destroyed (boolean) state, the actions they currently selected for execution. 
The remaining three (global) variables are used to hold the location towards which the pulse is currently directed, the action selected for execution by the environment, the phase value needed for encoding the transitions. 
 
Further, the initial states are represented by means of a simple formula, written as a conjunction of conditions: 

{\scriptsize
\begin{verbatim}
:initial
:var x
:cnj (= pulselocE NULL_StringLoc) (= actE NULL_Action) (= locATT[x] init) (= destroyedATT[x] FALSE) 
(= actATT[x] NULL_Action) (= turn turnTEMPS) (= phase P0) 
 \end{verbatim}} 
 
\noindent
The conditions above require that: the EMP pulse is not directed towards any waypoint, the environment has declared no action, all robots are in their initial state (here, \texttt{x} is implicitly quantified universally), no robot is destroyed, no robot declared yet an action, the turn is of the cannon, the phase is equal to $0$. Notice that nothing is said about the $Snow$ relation, as we want to check that the system is safe irrespective of the snow conditions. 

Further, we specify the safety formula to verify. For our example, this formula requires that at least one robot exists (we use an index \texttt{z1}, implicitly quantified existentially) so that the cell of array \texttt{locATT} has value equal to \texttt{target}, namely the robot is in the target location: 

{\scriptsize
\begin{verbatim}
:u_cnj (= locATT[z1] target) 
 \end{verbatim}} 
 
Transition formulae are encoded directly. For instance, the transition formula $(\ref{eq:Local0_async})$, for the local action $\mytexttt{gotoB}$ in the running scenario (in the \async case), is encoded in MCMT as two formulae: one for $\phase=0$ and one for $\phase=L$. 
The guard is composed of six conjuncts, where the index variable $\texttt{x}$ is used to express conditions on the same concrete agent: the first two correspond to $\phase=0$ and $\arr_{\Acts_t}[j_{\self}]=\nop$ in $(\ref{eq:Local0_async})$, and the rest encode the template's protocol function, i.e. correspond to $\overline{P_t}(\act)$ in $(\ref{eq:Local0_async})$. 
Intuitively, they require that the robot is in the initial location, is not destroyed, there is no pulse directed at the waypoint, the path is clear of snow. 
The keyword $\mytexttt{:numcases}$ specifies how many outcomes this $\mytexttt{:transition}$ has. In this case two: one for the index $j$ that is taken as \texttt{x} (which is arbitrarily selected) and one for the rest. This reconstruct the existential quantification $\exists j_{\self}$ in $(\ref{eq:Local0_async})$. In the first case, which is applied to the agent with index $j_{\self}$, the array variable $\mytexttt{actROBOTS[j]}$ is updated to $\mytexttt{gotoB}$ and $\mytexttt{phase}$ to $\mytexttt{L}$ while the others remain unchanged (assignments are encoded by simply listing, in the same order as variable definitions above, either a new value, thus expressing an update, or by simply repeating the variable name if its value is not updated). In the latter case only the phase is updated, as this case applies for all agents that are assigned an index other than $j_{\self}$. 

{\scriptsize
\begin{verbatim}
:transition  
:var j   
:var x
:guard 
   (= phase 0)
   (= actATT[x] Nop_Action)
   (= locATT[x] init)
   (= destroyedATT[x] FALSE)
   (not (= pulseLoc B))
   (= Snow (start B) FALSE)
   
   
   
   
   
   
:numcases 2
:case (= x j)
   :val locATT[j]  
   :val destroyedATT[j]   
   :val gotoB
   :val pulseLoc   
   :val actEnv   
   :val L
:case 
   :val  locATT[j]  
   :val destroyedATT[j]  
   :val actATT[j]
   :val pulseLoc   
   :val actEnv   
   :val L
 \end{verbatim}
 }

The encoding the rest of the transitions follows the same approach, although one has to manually write all the transitions, which is a delicate and cumbersome task, prone to error (the running example, for the \async execution semantics, requires 26 transitions). Moreover, the transition commented above correspond to the most simple transition formula in Section~\ref{sec:mas-abs}, while other transitions require more verbose encodings in MCMT.

For instance, the transition formula (\ref{eq:Local1_async}) that represents the bulk update of the environment together with all agents which declared a local action is encoded a set of MCMT transitions, one for each possible local action of the environment (since disjunction is not allowed in the syntax of MCMT, hence multiple transitions must be specified for capturing disjunctions). In our example, the environment has two local actions, namely \texttt{pulseA} and \texttt{pulseB}, hence two MCMT transitions must be added, each containing the required if-then-else encoding that matches the case-defined functions of the form $\mathtt{case~of}$ used in Section~\ref{sec:mas-abs}. We list here only the first transition, for \texttt{pulseA}:

{\scriptsize
\begin{verbatim}
:transition
:var j
:guard 
	(= phase PL) 
	(= actE pulseA) 
	(= turn turnTEMPS)
:numcases 5

:case (= actATT[j] gotoA) 
:val A
:val destroyedATT[j]
:val NULL_Action
:val A
:val NULL_Action
:val P0
:val turnREST

:case (= actATT[j] gotoB) 
:val B
:val destroyedATT[j]
:val NULL_Action
:val A
:val NULL_Action
:val P0
:val turnREST

:case (= actATT[j] goTargetA) 
:val target
:val destroyedATT[j]
:val NULL_Action
:val A
:val NULL_Action
:val P0
:val turnREST

:case (= actATT[j] goTargetB) 
:val target
:val destroyedATT[j]
:val NULL_Action
:val A
:val NULL_Action
:val P0
:val turnREST

:case 
:val locATT[j]
:val destroyedATT[j]
:val NULL_Action
:val A
:val NULL_Action
:val P0
:val turnREST
 \end{verbatim}
 }

The guard of this transition requires the phase to be equal to $L$, the action of the environment to be \texttt{pulseA}. We do not comment here on turns, as this is a mere detail of the example at hand, but it is not required in general. 
There are 5 cases, which account for the four possible actions that each robot might have selected, plus a catch-all case that is used if no action was selected (namely the agent performed $\nop$). In each of these cases, the appropriate variable update is performed (again, the order is dictated by the order in which these variables were defined in the file -- see above).

This justifies the need of a user-oriented approach, which we comment in Section~\ref{sec:tool}. First, we very briefly report on how the solver is executed.

\subsubsection{Executing MCMT}

Once the textual encoding is done, MCMT can be simply executed via command line, specifying as argument the textual file: \texttt{./mcmt file.txt}. For more details and options, please refer to the MCMT manual \cite{MCMT-manual}. 
 
\subsection{\TOOLtitle}
\label{sec:tool}

In this section, we present and illustrate \TOOLtitle~\cite{SAFE}, i.e., our own implementation of a user interface that allows to directly employ in practice the results presented in this paper. The tool automatises the textual encoding of the \MASsystem as MCMT input files, by relying on a \emph{MAS-oriented} modelling framework which takes care of the textual conversion. This allows the user to focus on modelling the \MASsystem, i.e., the agent templates and the environment template, without worrying about how their constructs translate in the encoding for the model checker. As we show in this section, the tool also allows to convert the witnesses for unsafety that MCMT returns (when the input ABS is unsafe) back into executions of the original \MASsystem. The resulting tool chain constitutes a user-friendly and effective, implemented approach for modelling and verifying the safety of parameterized multi-agent systems. Some preliminary experimental evaluation is performed in Section~\ref{sec:evaluation}.

\subsubsection{The \TOOL modeling interface}
\label{sec:thegui}

The interface of \TOOLtitle is available at \cite{SAFE}, and can be used to model and encode into MCMT input files (i.e., into textual representations of array-based systems) any \MASsystem as those introduced in this paper. As an example, Figure~\ref{fig:tool_cannon} shows the running example of this paper as it appears in the \TOOL GUI.

\begin{figure}[p]
\centering{
\resizebox{\columnwidth}{!}{
\includegraphics{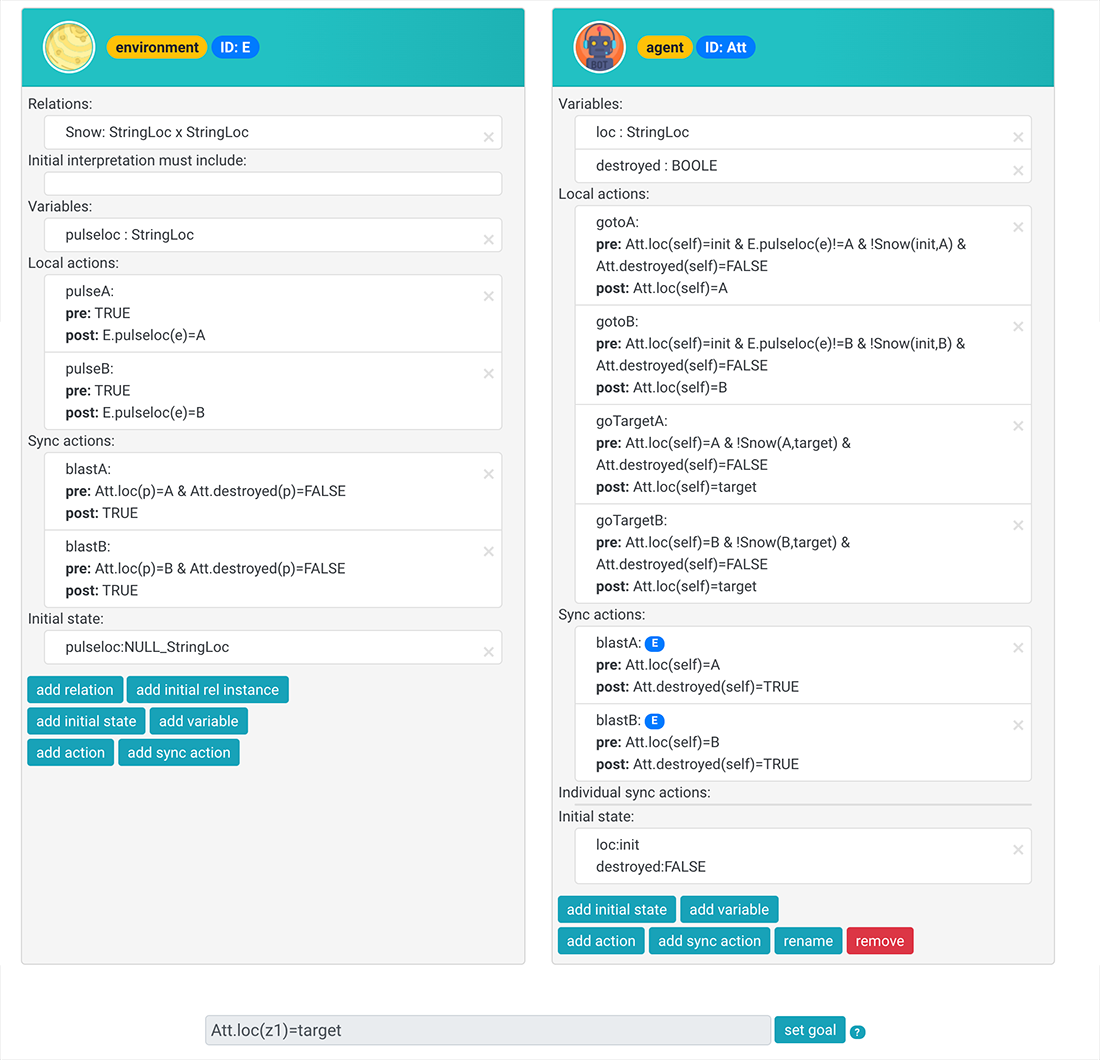}
}}
\caption{The main part of the \TOOL Base interface, showing the \MASsystem model for the running example. The GUI provides intuitive buttons to either add or remove any component from the templates, so that there is no manual coding required.}
\label{fig:tool_cannon}
\end{figure}

In its present version, there are a number of minor differences between \TOOL and the representations used here:

\begin{compactitem}
\item The version of \TOOL currently available, called \TOOL Base, can only be used to model \async \MASsystem{s}. We are currently working on \TOOL For All, an extension of \TOOL Base which admits the kind on universal quantification needed to represent \sync \MASsystem{s} (see the transition formulae in Section~\ref{sec:encoding_sync}, where universal quantification is used);
\item The representation of agent and environment templates used by \TOOL differs slightly from the one used here, although it is equivalent. Indeed, Definition~\ref{def:agent_template} defines an agent templates as a labelled, finite-state transition system, whereas \TOOL assumes a more succinct representation, where instead of listing explicitly the local states and transitions of the templates, we assume a STRIPS-like approach. Therefore, instead of an explicit finite-state machine labelled by actions, actions are specified by means of pre- and post- conditions; 
\item Similarly, the GUI of \TOOL Base does not allow to use disjunction in action preconditions. This implies that, whenever we want to model an action label $a$ that is available under two distinct conditions (i.e., a precondition of the form $\phi_1\lor\phi_2$, we need instead to use a distinct copy $a'$ of the action and assign to these the preconditions $\phi_1$ and $\phi_2$, respectively. Future versions of \TOOL will avoid this restriction. 
\end{compactitem}
 
 \smallskip
 At the same time, \TOOL Base includes some minor features that were not included in the formal model discussed here, as these are primarily implementation details: 

\smallskip
\begin{compactitem}
\item Apart from local actions and synchronization actions, the tool allows a further type of actions, called \emph{individual} synchronization actions, that can only be executed by the environment and by exactly another agent at the same time. In Section~\ref{sec:further_example} we provide an additional example of \MASsystem in which such feature is essential (note that this corresponds to a type of action considered in \cite{KouvarosAIJ16}, as discussed in Section~\ref{sec:related_pmas});
\item The \emph{turn-based} mechanics, which allows to alternate the actions of two distinct subsets of the templates (as in our running example), is a core-feature of the tool and can be easily enabled or disabled without the need of manually implement the alternation logic. In order to make this compatible with the execution semantics for synchronization actions, \TOOL Base allows an additional annotation of these actions, by which it is possible to specify the \emph{initiator} template of each synchronization action; 
\item The GUI allows to request additional conditions on the instances of \MASsystem that the verification should consider. For instance, we  can request that a certain template is not empty (that is, that at least a concrete agent with that template exists). More of these features are continuously being implemented in \TOOL.
\end{compactitem}


\section{Experimental evaluation}
\label{sec:evaluation}

In this section we report on preliminary, simple experiments run with MCMT on the inputs obtained as outputs of \TOOLtitle (Base). We first consider the running example of this paper, then introduce a further one from the literature.  

\subsection{Experiments on the running example}

The textual encoding of the ABS corresponding to the \MASsystem in the running example, obtained through \TOOL Base, is solved by MCMT v.2.9 (which uses an old version of Yices - the 1.0.40- as a background SMT solver) in 2.7 seconds on a early-2015 Macbook (2.7 GHz Dual-Core Intel Core i5, 8 GB RAM). MCMT correctly reports that the system is unsafe.\footnote{This example is available at: \url{http://safeswarms.club/page/mcmt/cannon}.}   

The input file, of which some parts are reported and commented in the previous section, has 3 local variables for the robot template $att$, 3 global variables and 26 transitions. 

MCMT gives in output the following sequence as witness of unsafety:

{
\begin{verbatim}
[t2][t17][t3_1][t15][t1][t16][t5_1][t15]
\end{verbatim}} 

\noindent
where each \texttt{t}$_n$ represents the execution of the $n$-th transition in the input file, following the order in which they appear. \TOOL provides an input dialogue in which one can paste such witnesses, so as to retrieve the ``run template" of the \MASsystem that corresponds to such sequences. We say that such run is a ``run template" because, unlike the definition of run of \MASsystem{s} given in Section~\ref{sec:runs}, it does not list the precise number of concrete agents that performed the action associated to that transitions. For the sequence above, the run template shown by \TOOL imposes the following sequence of actions:

{
\begin{verbatim}
pulseB, gotoA, pulseA, goTarget
\end{verbatim}} 

\noindent
in which, trivially, a number of agents reaches the target while avoiding the EMP pulse used by the target, which in this instance does not even attempt to use the blasts to destroy robots. In fact, transition 2 corresponds to equation (\ref{eq:Local0_async}) when $\act=\texttt{pulseB}$ and $t = e$, followed by transition 17 which corresponds to the bulk update as in equation (\ref{eq:Local1_async}). Similarly, transition 3 corresponds to (\ref{eq:Local0_async}) when $\act=\texttt{gotoA}$  and $t = Att$, followed again by a transition conforming to equation (\ref{eq:Local1_async}), and so on. 
 
When a \MASsystem is determined to be unsafe, \TOOL does not provide support for embedding into the model the witness provided by MCMT, so it is the responsibility of the user to update their MAS by taking insights from the witness and then check the new model again.  
 
By increasing the number of agent templates and number of waypoints required to reach the target on either of the two paths (i.e., by having waypoints $\texttt{A}_1,\ldots,\texttt{A}_n$ and $\texttt{B}_1,\ldots,\texttt{B}_n$), we can test the scalability of our approach (i.e., the use of \TOOL and MCMT for checking safety of \MASsystem{s}) with respect to the \emph{length} of possible runs. In these versions of the running example, cannon blasts hit all locations on the same path simultaneously, the EMP pulse can block all robots on the entire path at which it is directed, and the protocol of the cannon is so that the cannon can freely fire at either path without checking that there are available robot targets. This is achieved by adding a further \texttt{path} variable to robot templates.\footnote{These examples are available at: \url{http://safeswarms.club/page/mcmt/exZb} with \texttt{Z}=2..6}

\bigskip
\begin{filecontents*}{data1.dat}
x  y
1~-~1~-~26  2.7
1~-~2~-~28  8.6
1~-~3~-~32  21.9
1~-~4~-~36  49
1~-~5~-~40  96
1~-~6~-~44 174
\end{filecontents*}
\pgfplotsset{compat=1.14} 
\begin{tikzpicture}
\begin{axis}[
    width=12cm,
    height=6cm,
    ybar,
    bar width=0.3,
    ymin=0,
    ytick={3,22,49,96,176},
    xticklabel style={rotate=45,font=\footnotesize},
    xticklabels from table={data1.dat}{x}, 
    xtick=data, 
    enlarge x limits=0.25, 
    xlabel=\# of agent templates - \# of waypoints per path - \# of transitions,
    ylabel=seconds
    ]
    \addplot table[x expr=\coordindex,y=y] {data1.dat}; 
\end{axis}
\end{tikzpicture}

As it can be seen from the experiments, the number of transitions increase slightly, and the execution time increases as well as the number of actions and length of the shortest unsafe runs increases. Indeed, there is of course a direct relationship between the length of the \MASsystem runs that must be checked and the \emph{depth} of the  state-space exploration of MCMT. The number of variables remains instead the same (although more constants are introduced). These findings are not surprising, because the number of possible actions combinations, for agents and environment, remains the same at each step. 

If instead we increase the number of templates, by introducing further \emph{copies} of the robot template from the original problem instance (preserving only waypoints \texttt{A} and \texttt{B}), then both the number of transitions and variables increase substantially, leading to much longer execution times. This was expected, as the possible combinations of actions, when considering all agent templates, grow exponentially. We report below on the average execution times for the case of 1, 2, 3 and 4 robot templates (in addition to the environment template).\footnote{These examples are available at: \url{http://safeswarms.club/page/mcmt/exZ} with \texttt{Z}=2..4}

\bigskip
\begin{filecontents*}{data1.dat}
x  y
1~-~7~-~26  2.7
2~-~12~-~38  48
3~-~16~-~52  367
4~-~20~-~66  1530
\end{filecontents*}
\pgfplotsset{compat=1.14} 
\begin{tikzpicture}
\begin{axis}[
    width=12cm,
    height=6cm,
    ybar,
    bar width=0.3,
    ymin=0,
    ytick={48,367,1530},
    xticklabel style={rotate=45,font=\footnotesize},
    xticklabels from table={data1.dat}{x}, 
    xtick=data, 
    enlarge x limits=0.25, 
    xlabel=\# of agent templates - \# of variables - \# of transitions,
    ylabel=seconds
    ]
    \addplot table[x expr=\coordindex,y=y] {data1.dat}; 
\end{axis}
\end{tikzpicture}

To run these experiments, we disabled the default setting of MCMT which puts a bound of 50 to the possible number of transitions. 

Interestingly, if we remove the assumption that the agents and the environment alternate their moves (thus also removing the global variable \texttt{turn} that is automatically added by \TOOL), determining the unsafety of the \MASsystem becomes much easier.\footnote{To replicate these experiments, it is sufficient to turn off the template alternation through the GUI.}

\bigskip
\begin{filecontents*}{data1.dat}
x  y
1~-~6~-~23  1.2
2~-~11~-~35  7.7
3~-~15~-~49  32.3
4~-~19~-~63  89
\end{filecontents*}
\pgfplotsset{compat=1.14} 
\begin{tikzpicture}
\begin{axis}[
    width=12cm,
    height=6cm,
    ybar,
    bar width=0.3,
    ymin=0,
    ytick={1,8,32,90},
    xticklabel style={rotate=45,font=\footnotesize},
    xticklabels from table={data1.dat}{x}, 
    xtick=data, 
    enlarge x limits=0.25, 
    xlabel=\# of agent templates - \# of variables - \# of transitions,
    ylabel=seconds
    ]
    \addplot table[x expr=\coordindex,y=y] {data1.dat}; 
\end{axis}
\end{tikzpicture}

Apparently, and contrary to the intuitive understanding, the verification becomes more challenging as the \MASsystem becomes more specified (that is, with more detailed pre- and post-conditions of actions, turn indicators, goal conjuncts). 

This highlights the limitations of this implementation, and the use of MCMT for larger examples, although the examples we can find in the literature are smaller than these, and involve typically two agent templates. Nonetheless, we should keep in mind that these problem instances are intrinsically computationally demanding, since checking the safety of a given \MASsystem is exponential in the number of variables (as these determine the number of global states) and the number of global transitions. For example, for the case of 3 agent templates, more than 800k calls were made to the SMT solver (i.e., to Yices) and 2k symbolic nodes were explored. For the case of 4 templates, 4M calls are made.    

\subsection{A further example}
\label{sec:further_example}

In this section we introduce a further example, inspired to the train-gate-controller scenario that has been repeatedly used in the literature. 

This scenarios allows us to discuss one of the features of \TOOL that are not included in this formal framework: \emph{individual synchronization actions} first discussed in Section~\ref{sec:thegui}. These are like regular synchronization actions, with the difference that exactly one agent synchronizes with the environment. 

\begin{example}
The \MASsystem is constituted by a small rail network where a (unbounded) number of trains share one critical section, namely a tunnel passage. The tunnel is equipped with a controller module that is responsible to regulate the access, so that at all times at most one train is in the tunnel. For doing so, both sides of the tunnel are equipped with traffic lights. While the status of the traffic lights can be either red or green, each train can be in four different states: queuing for the permission to enter, in the tunnel, away from the tunnel, forbidden to enter. Moreover, trains can be either normal trains or prioritized trains, so that while a prioritized train can enter the tunnel at any given time (if the tunnel is free), normal trains can only be allowed to enter (by the controller) when there are no priority trains waiting. To accomplish this, the traffic lights has two distinct variants of the green color: prioritized green and normal green. Prioritized green is used by the controller to serve prioritized trains, whereas normal green is used by the controller to serve normal trains. As a consequence, prioritized trains can be forbidden to enter only if away from the tunnel (by changing the color of the light to normal green).

The scenario can be encoded as a \MASsystem composed of an agent template representing prioritized trains, an agent template representing normal trains, and an environment template representing the controller. A prioritized train is initially in state \texttt{Waiting}, the controller is initially in state \texttt{PGreen}, and a normal train is initially in state \texttt{Forbidden}. Therefore prioritised trains are initially waiting to enter the tunnel, normal trains are initially locked from entering the tunnel, and the controller initially serves only prioritized trains. The actions \texttt{p\_enter} and \texttt{p\_exit} are individual synchronization actions modelling (exactly one of the) prioritized trains entering and exiting the tunnel. Similarly, the actions \texttt{n\_enter} and \texttt{n\_exit} are individual synchronization actions enabling the normal trains to enter and exit the tunnel. The action \texttt{allow\_n} is a synchronous action representing the action of the controller to allow normal train (switch to normal green), and similarly \texttt{allow\_p}. These need to be synchronization actions (rather than local actions of the environment) since their execution must be \emph{visible} to the trains as well, which also change their internal local state. Finally, the actions \texttt{n\_approach} and \texttt{p\_approach} are local actions that trains execute for approaching the tunnel and hence wait for their green light.

The safety checking task is to verify that it is not possible for two trains (of any type) to be in the tunnel at the same time, and it is easy to verify, given the description above, that the system is in fact safe. Indeed, the street light guarantees (via individual synchronization actions) that only one train enters the tunnel; moreover, the color is set again to green (either normal or priority) only when a train exits the tunnel. 
\end{example}

As expected, MCMT reports that the \MASsystem encoding this scenario is safe. The verification takes 6.7 seconds on the same setup as in the previous experiments.\footnote{This example is available at: \url{http://safeswarms.club/page/mcmt/train}.}

\begin{figure}[p]
\centering{
\resizebox{\columnwidth}{!}{
\includegraphics{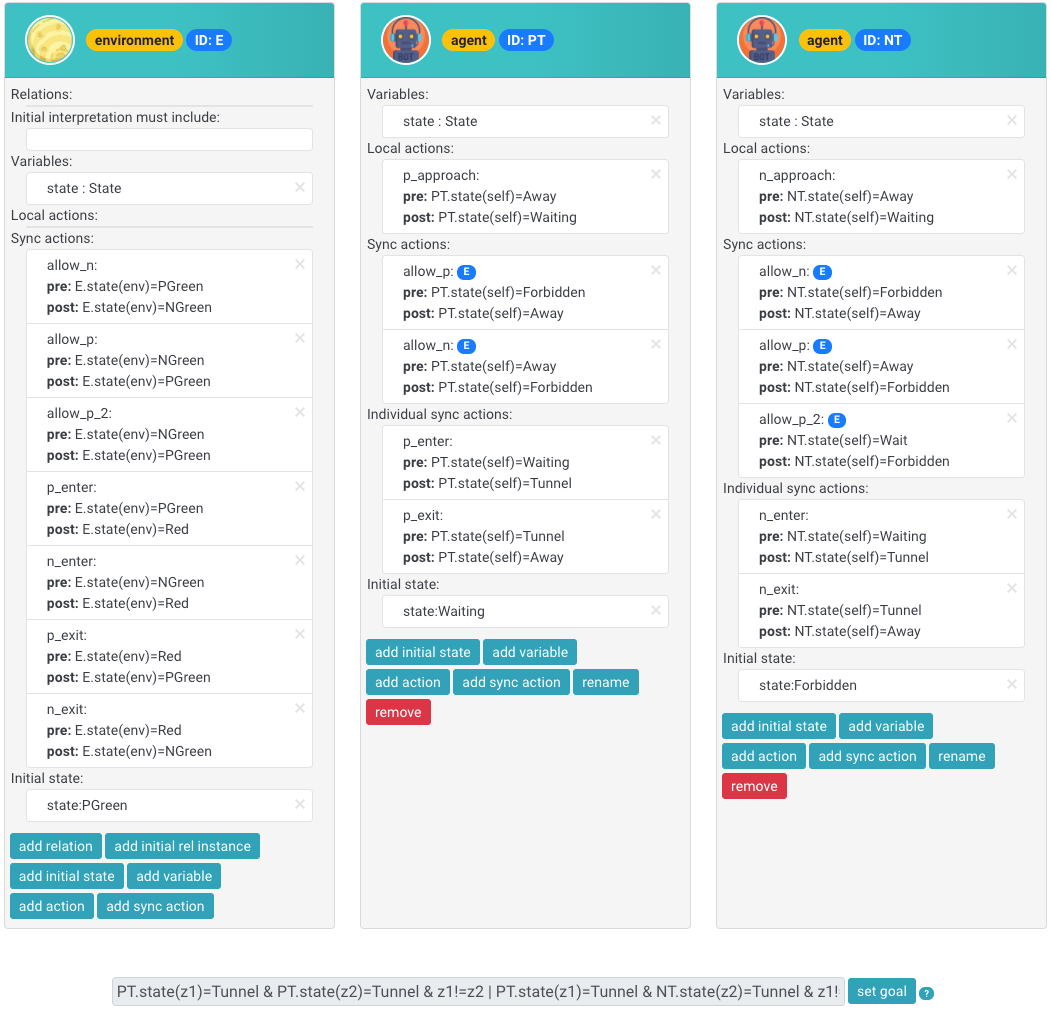} 
}}
\caption{The main part of the \TOOL Base interface, showing the \MASsystem model for the trains example.}
\label{fig:tool_trains}
\end{figure}

As before, we attempt to stress the use of MCMT for the verification of this scenario by adding further copies of agent templates (in this case, normal trains), to make the number of variables and global transitions grow. Note that the safety formula grows (in length) as the number of templates grows as we want to forbid any combination of trains in the tunnel. The execution of MCMT for 4 normal train templates (with 11 variables and 58 transitions) reports an average execution time of 58 seconds. 
%
%
To encode the case of 6 normal train templates, the formula would need 27 disjuncts. Due to the current limitation of MCMT, a bound of 9 exists, so we cannot increase further the number of templates.

As a final note, observe that a \sync \MASsystem allows us to specify, by design, that no train can enter the tunnel if there exists another train currently there (or, alternatively, that the action of turning the street light to color green cannot be executed). Indeed, universal quantification allows us to write, as action preconditions, that all trains are not in the tunnel. This enhanced input formalism should allow a more natural modelling of examples. To implement this in practice in \TOOL, however, we need the upcoming version \TOOL For All.


\section{Conclusions and Future Work}
\label{sec:futurework}

We devised a novel technique for checking safety of \MASsystem{s}. The advantages are various, as summarised in Sec.\ref{sec:related}. 
This opens up towards richer parameterised and data-aware MAS settings to be tackled in a novel, solid, vast and extensively studied range of formal techniques which were so far not directly available to the AI verification community. 
In future work, we will enrich this \myi beyond \async \MASsystem{s}, \myii  with full-fledged, relational, read/write databases, \myiii also add cardinality constraints and \myiv experimentally evaluate and compare MCMT as a model checker for this problem. 
The possibility of continuing along so different extensions within the same framework is a direct consequence of this work, which establishes a formal connection between parameterised verification in MAS and the long-standing tradition of SMT-based model checking for array-based systems.


\bibliographystyle{plain}
\bibliography{ecai}

\appendix

\newpage


\section{Appendix}
\sectionmark{Appendix: Proof of Theorems}

In this Appendix we provide the full proofs of the theorems from Section~\ref{sec:results}.

\subsection{Proof of Theorem~\ref{thm:sound-complete}.}\label{sec:app1}

In this section we provide a proof of Theorem~\ref{thm:sound-complete}. The proof is quite involved and follows the structure, even if with several modifications, of an analogous theorem for Relational Artifact Systems (RASs \cite{MSCS20}). 

We observe  that transition formulae introduced in Section~\ref{sec:mas-abs} do not go beyond first-order logic. We clarify this point now, before going into formal proofs.

First of all, to obtain a more compact
representation, we made use there of definable extensions as a means for
introducing so-called \emph{case-defined functions}.  We fix a signature
$\Sigma$ and a $\Sigma$-theory $T$; a \emph{$T$-partition} is a finite set
$\kappa_1(\ux), \dots, \kappa_n(\ux)$ of quantifier-free formulae
such that $T\models \forall \ux \bigvee_{i=1}^n \kappa_i(\ux)$ and
$T\models \bigwedge_{i\not=j}\forall \ux \neg (\kappa_i(\ux)\wedge
\kappa_j(\ux))$.  Given such a $T$-partition
$\kappa_1(\ux), \dots, \kappa_n(\ux)$ together with $\Sigma$-terms
$t_1(\ux), \dots, t_n(\ux)$ (all of the same target sort), a
\emph{case-definable extension} is the $\Sigma'$-theory $T'$, where
$\Sigma'=\Sigma\cup\{F\}$, with $F$ a ``fresh'' function symbol (i.e.,
$F\not\in\Sigma$)\footnote{Arity and source/target sorts for $F$ can be
 deduced from the context (considering that everything is well-typed).}, and
$T'=T \cup\bigcup_{i=1}^n \{\forall\ux\; (\kappa_i(\ux) \to F(\ux) =
t_i(\ux))\}$.
Intuitively, $F$ represents a case-defined function, which can be reformulated
using nested if-then-else expressions and can be written as
$ F(\ux) ~:=~ \mathtt{case~of}~ \{\kappa_1(\ux):t_1;\cdots;\kappa_n(\ux):t_n\}.
$ By abuse of notation, we identify $T$ with any of its case-definable
extensions $T'$.  In fact, it is easy to produce from a $\Sigma'$-formula
$\phi'$ a $\Sigma$-formula $\phi$ equivalent to $\phi'$ in all models of $T'$:
just remove (in the appropriate order) every occurrence $F(\uv)$ of the new
symbol $F$ in an atomic formula $A$, by replacing $A$ with
$\bigvee_{i=1}^n (\kappa_i(\uv) \land A(t_i(\uv)))$.

Secondly, he lambda-abstraction definitions in~\eqref{eq:trans1} will make the proof of Lemma~\ref{lem:eq1} below smooth.
Recall that an expression like
\[
b = \lambda y. F(y,\uz)
\]
can be seen as a mere abbreviation of $\forall y~b(y)=F(y,\uz)$.
However, the use of such abbreviation makes clear that, e.g., a formula like
\[
\exists b~( b = \lambda y. F(y,\uz) \wedge \phi(\uz, b))
\]
is equivalent to
\begin{equation}\label{eq:aux}
\phi(\uz, \lambda y. F(y,\uz)/b)~~.
\end{equation}
Since our $\phi(\uz, b)$ is in fact a first-order formula, our $b$ can occur in it only in terms like $b(t)$, so that in~\eqref{eq:aux} all
occurrences of $\lambda$ can be eliminated by the so-called $\beta$-conversion: replace $\lambda y F(y,\uz)(t)$ by $F(t, \uz)$. Thus, in the end, either we use definable extensions or definitions via lambda abstractions, \emph{the formulae we manipulate can always be converted into plain first-order formulae}.


\begin{lemma}\label{lem:eq1}
	The preimage of a state formula is logically equivalent to a state formula.
\end{lemma}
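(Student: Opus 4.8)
The plan is to reduce the statement to a single transition formula and then eliminate every primed variable by substitution. Since $\tau=\bigvee_i\hat\tau_i$ and $\textit{Pre}(\tau,\phi)=\bigvee_i\textit{Pre}(\hat\tau_i,\phi)$, and since a finite disjunction of state formulae is logically equivalent to a state formula (rename the index-variable blocks so that they are pairwise disjoint, then merge them into one block, using that the index sorts are non-empty), it suffices to prove that $\textit{Pre}(\hat\tau,\phi)$ is logically equivalent to a state formula for each single transition formula $\hat\tau$ of Section~\ref{sec:mas-abs}. I work in the \async case, so that every guard is quantifier-free (formulae (\ref{eq:Local0_async})--(\ref{eq:Global2_async})).

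First I would fix a state formula $\phi\equiv\exists\uj\,\phi_0(\uj,\ux,\ua)$ with $\phi_0$ quantifier-free and write out $\textit{Pre}(\hat\tau,\phi)=\exists\ux'\,\exists\ua'\,\big(\hat\tau(\ux,\ua,\ux',\ua')\land\exists\uj\,\phi_0(\uj,\ux',\ua')\big)$. Plugging in the shape of $\hat\tau$ from~(\ref{eq:trans1}), namely $\exists\ue\,\big(\gamma(\ue,\ux,\ua)\land\bigwedge_i x'_i=c_i\land\bigwedge_i a'_i=\lambda j.\,F_i(j,\ue,\ux,\ua)\big)$, every primed individual variable is fixed to a constant and every primed array variable to a $\lambda$-abstraction, so the quantifiers $\exists\ux'\,\exists\ua'$ can be discharged by substitution — this is exactly the equivalence-preserving elimination recalled above, where $\exists b\,(b=\lambda y.\,F(y,\uz)\land\psi(\uz,b))$ collapses to $\psi(\uz,\lambda y.\,F(y,\uz)/b)$. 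Since $\phi_0$ is first order, each array variable $a'_i$ occurs in it only inside read-terms $a'_i(t)$, and every such term is removed by $\beta$-conversion, turning $\lambda j.\,F_i(j,\ue,\ux,\ua)(t)$ into $F_i(t,\ue,\ux,\ua)$; likewise each $x'_i$ is replaced by $c_i$.

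After prenexing the harmless block $\exists\ue$ (which binds no variable of $\uj$), what remains is a formula $\exists\ue\,\exists\uj\,\big(\gamma(\ue,\ux,\ua)\land\phi_0'(\uj,\ue,\ux,\ua)\big)$, where $\phi_0'$ results from $\phi_0$ by the above substitutions and still mentions the case-defined functions $F_i$. The last step is to eliminate these function symbols: as recalled in the excerpt, each atom $A$ containing a term $F_i(\uv)$ is rewritten as $\bigvee_k\big(\kappa_k(\uv)\land A(t_k(\uv))\big)$, iterated until no $F_i$ remains, yielding a genuine quantifier-free $\Sigma$-formula. Finally I would observe that all variables of $\ue$ introduced by the transition formulae (\ref{eq:Local0_async})--(\ref{eq:Global2_async}) are of index sort — they are the \self-index $j_{\self}$ and the auxiliary index variables coming from the translated protocol $\overline{P_t}(\act)$ (recall Remark~2) — so $\exists\ue\,\exists\uj$ is a single block of index-sort existential quantifiers over a quantifier-free matrix, i.e.\ exactly a state formula, and all the rewriting steps above preserved logical equivalence.

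I expect the main obstacle to be bookkeeping rather than conceptual: making precise the substitution of the $\lambda$-defined array updates (both the constant assignments and the $\mathtt{case~of}$ functions of~(\ref{eq:Local1_async}) and~(\ref{eq:Global2_async})) into $\phi_0$ and the interaction of $\beta$-conversion with nested read-terms, and then checking that eliminating the case-defined functions reintroduces neither quantifiers nor primed variables. One must also keep track of the fact that this argument relies on the guards being quantifier-free, which holds for \async \ABMAS; in the \sync encoding the guards of~(\ref{eq:localSync1}) and~(\ref{eq:localSync2}) carry a universal index quantifier, so the corresponding preimage is only equivalent to a $\forall\exists$-formula rather than to a state formula, which is precisely why Theorem~\ref{thm:sound-only} yields only partial soundness in that case.
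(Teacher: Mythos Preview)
Your proposal is correct and follows essentially the same approach as the paper: reduce to a single disjunct $\hat\tau$, substitute out the primed variables using the functional form of the updates (constants for the $\ux'$ and $\lambda$-abstractions for the $\ua'$, removed by $\beta$-conversion), and observe that the remaining existentially quantified variables are all of index sort so the result is a state formula. The paper's proof is terser---it simply writes the preimage as $\exists\ue\,\exists\ue_0\,(\gamma(\ue_0,\ux,\ua)\wedge\phi(\ue,\uF/\ux',\lambda y.\uG/\ua'))$ and declares it a state formula---while you spell out the elimination of case-defined functions and the index-sort check for $\ue$, but these are exactly the bookkeeping steps the paper leaves implicit.
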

\begin{proof} We manipulate the formula
	\begin{equation}\label{eq:pre}
	\exists \ux'\,\exists \ua'\, (\tau(\ux, \ua, \ux', \ua') \wedge \exists \ue~ \phi(\ue, \ux',\ua'))
	\end{equation}
	%
	up to logical equivalence, where $\tau$ is given by\footnote{Actually, $\tau$ is a disjunction of such formulae, but it easily seen that disjunction can be
		accommodated by moving existential quantifiers back-and-forth through them.}
	\begin{equation}\label{eq:trans2}
	\exists \ue_0\left(\gamma(\ue_0, \ux, \ua) \wedge
	\ux'= \uF(\ue_0, \ux, \ua) \wedge \ua'=\lambda y. \uG(y,\ue_0,\ux, \ua)
	\right)
	\end{equation}
	where $\ue, \ue_0$ are variables of index type  (here we used plain equality  for conjunctions of equalities, e.g. $\ux'= \uF(\ue_0, \ux, \ua)$ stands for $\bigwedge_i x'_i= F_i(\ue, \ux, \ua)$).
	Repeated substitutions  show that~\eqref{eq:pre} is equivalent to
	\begin{equation}
	\exists \ue\,\exists \ue_0\, \left(\gamma(\ue_0, \ux, \ua) \wedge \phi(\ue, \uF(\ue_0, \ux,\ua)/\ux',\lambda y.\uG(y,\ue_0,\ux, \ua)/\ua' )\right)
	\end{equation}
	which is a state formula.
\end{proof}
\vskip 2mm

We underline that Lemmas~\ref{lem:eq1} gives an explicit effective procedure
for computing preimages. At this point, the only formulae we need to test for satisfiability in lines 2 and 3 of  the backward reachability algorithm are
the $\exists\forall$-formulae introduced below.

Let us call $\exists\forall$-formulae the formulae of the kind 
\begin{equation}\label{eq:s02}
\exists \ue\; \forall \ui \; \phi(\ue, \ui, \ux, \ua)
\end{equation}
where the variables $\ue, \ui$ are variables whose sort is of index type and $\phi$ is quantifier-free.
The crucial point for the following lemma to hold is that the \emph{universally} quantified variables in $\exists\forall$-formulae are all of index type:

\begin{lemma}\label{lem:sat}  
	The satisfiability of a $\exists\forall$-formula in a model of an AB-PMAS is decidable. 
\end{lemma}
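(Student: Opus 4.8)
The plan is to reduce the satisfiability of an $\exists\forall$-formula $\exists\ue\,\forall\ui\,\phi(\ue,\ui,\ux,\ua)$, in a model of an \ABMAS, to the satisfiability of a quantifier-free formula in a multi-sorted theory of uninterpreted symbols with equality, for which a decision procedure is available; here we treat the array variables in $\ua$ as free unary function symbols from an index sort to an element sort, and the symbols in $\Rels$ as free predicate symbols. First I would eliminate the existential block by replacing every variable of $\ue$ with a fresh constant of the corresponding index sort, obtaining a formula $\forall\ui\,\phi(\uc,\ui,\ux,\ua)$ which is satisfiable exactly when the original one is. The key observation is the one stressed right before the statement: all variables in $\ui$ are of \emph{index} sort, and in the signature $\tup{\cS,\Rels,C}$ of an \ABMAS no function symbol---in particular no array symbol---has an index sort as its target (arrays map indexes to elements), while $C$ contains no index-sorted constants. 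Hence the set $\mathcal{G}$ of ground terms of index sort built from $C$ and the new constants $\uc$ is \emph{finite}, and the variables $\ui$ can occur in $\phi$ only inside (dis)equalities between index terms or as arguments of array symbols. This puts us in a Bernays--Sch\"onfinkel--Ramsey situation: $\forall\ui\,\phi$ is equisatisfiable with the quantifier-free formula obtained by instantiating the tuple $\ui$ over all tuples of the appropriate length drawn from $\mathcal{G}'$, where $\mathcal{G}'$ extends $\mathcal{G}$ with one fresh constant for each index sort not already inhabited by a term of $\mathcal{G}$. From a model of the instantiated formula one recovers a model of $\forall\ui\,\phi$ by passing to the substructure whose index carriers are exactly the denotations of the terms in $\mathcal{G}'$: nothing in $\phi$ can distinguish index elements outside this set, since they could only be observed through array reads or index (dis)equalities, both of which are accounted for by the instantiation.

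It then remains to decide the satisfiability of the resulting quantifier-free formula, which lives in the combination of the empty theory over the index sorts and \textit{EUF} over the element sorts---with the arrays as uninterpreted functions and the $\Rels$ as uninterpreted predicates---and is therefore decidable, e.g.\ via congruence closure or a Nelson--Oppen combination; this is exactly the kind of proof obligation discharged by the SMT solver underlying MCMT. Finally, to obtain the statement as phrased, I would check that a model of the quantifier-free formula can always be turned into a model of an \ABMAS: the index carriers $\IDS_t$ can be padded to be infinite as required by Definition~\ref{def:agent_template} (extending the arrays on the new indexes in an arbitrary way is harmless, since the universal quantifiers have already been instantiated away over the finitely many index witnesses actually needed), and the action sorts, which occur in $\phi$ only through the finitely many action constants of $\bigcup_t\ActsA_t\cup\ActsAE$, can be interpreted exactly over those finite sets.

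The step I expect to be the main obstacle is making the generated-substructure/instantiation argument fully rigorous in this multi-sorted setting \emph{with free relation symbols}: one has to argue carefully that array reads at the instantiated indexes, together with the (dis)equalities among index terms, are the only channel through which the universally quantified variables interact with the rest of the formula, so that the finite instantiation is complete---this is precisely where the absence of index-sorted function symbols is used essentially. It is also the point that departs from the related development for Relational Artifact Systems in \cite{MSCS20}, whose signatures carry no free relation symbols; the required adaptation follows the array-based-systems arguments of \cite{lmcs,CGGMR19}.
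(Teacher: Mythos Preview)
Your argument follows essentially the same route as the paper's: Skolemize the existential block, use the fact that the universally quantified variables are all of index sort---and that index sorts have no incoming function symbols---to reduce to a finite instantiation over the existential witnesses, and discharge the resulting quantifier-free problem in EUF. The paper phrases this slightly differently (it first guesses an equality partition on the $\ue$ via an $\mathrm{AllDiff}$ disjunction, then instantiates the $\ui$ over the $\ue$, simplifies the index-sort atoms to $\top/\bot$, and finally replaces the terms $\ua[\ue]$ by fresh element-sort variables $\uz$), but the substance is the same finite-instantiation argument you describe.

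There is, however, one flawed step in your final paragraph. You claim that the small model obtained after instantiation can be ``padded'' to have infinite index carriers, and that this is harmless because ``the universal quantifiers have already been instantiated away.'' But the statement to be established is satisfiability of the \emph{original} $\exists\forall$-formula, not merely of its finite instantiation; once you add fresh index elements and extend the arrays on them arbitrarily, $\forall\ui\,\phi$ may very well fail at those new indexes---nothing forces $\phi(\uc,i,\ux,\ua)$ to hold for a freshly added $i$. The paper goes in the opposite direction: given a model of the instantiated formula, it \emph{restricts} the interpretation of the index sorts to the finitely many witnesses named by the $\ue$ and observes that this restricted structure satisfies the original $\exists\forall$-formula, since every instance of the universal quantifier is now one of the finitely many already checked. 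In other words, the correct model-construction step here is shrinking, not padding; your concern about the infinite $\IDS_t$ of Definition~\ref{def:agent_template} is not something the paper's proof engages with, and the padding argument as you have written it does not go through.
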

\begin{proof}
	First of all, notice that a $\exists\forall$-formula~\eqref{eq:s02} is equivalent to a disjunction of formulae of the kind
	\begin{equation}\label{eq:s03}
	\exists \ue\; ({\rm AllDiff}(\ue) \wedge \forall \ui \; \phi(\ue, \ui, \ux, \ua))
	\end{equation}
	where ${\rm AllDiff}(\ue)$ says that any two variables of the same sort from the $\ue$ are distinct (to this aim, it is sufficient to guess a partition and to keep, via a substitution,
	only one element for each equivalence class).\footnote{In the MCMT implementation, state formulae are always maintained so that all existential variables occurring in  them are differentiated and there is no need of this expensive computation step.} So we can freely assume that $\exists\forall$-formulae are all of the kind~\eqref{eq:s03}.
	

	Let us consider now the set of all (sort-matching) substitutions $\sigma$ mapping the $\ui$ to the $\ue$.
	The formula~\eqref{eq:s03} is satisfiable in a model of the AB-PMAS iff so it is the formula
	\begin{equation}\label{eq:s04}
	\exists \ue\; ({\rm AllDiff}(\ue) \wedge \bigwedge_{\sigma}\phi(\ue, \ui\sigma, \ux, \ua))
	\end{equation}
	(here $\ui\sigma$ means the componentwise application of $\sigma$ to the $\ui$): this is because, if~\eqref{eq:s04} is satisfiable in a structure $\cM$, then we can take as $\cM'$ the
	same $\Sigma$-structure as $\cM$, but with the interpretation of the index sorts restricted only to the elements named by the $\ue$ and get in this way that $\cM'$ satisfies ~\eqref{eq:s03}.	Thus, we can freely concentrate on the satisfiability problem of formulae of the kind~\eqref{eq:s04} only.
	
	Now, the only atoms occurring in the subformula  $\phi(\ue, \ui\sigma, \ux, \ua))$ of~\eqref{eq:s04} whose argument terms are terms of index sorts are of the kind $e_s=e_j$, so all such atoms can be 
	replaced either by $\top$ or by $\bot$ (depending on whether we have $s=j$ or not). So we can assume that there are no such atoms in $\phi(\ue, \ui\sigma, \ux, \ua))$ and as a result, the variables $\ue$ can only occur there as arguments of the $\ua$.

	Let now $\ua[\ue]$ be the tuple of the terms among the terms of the kind $a_j[e_s]$ which are well-typed. Since in~\eqref{eq:s04} the $\ue$ can only occur as arguments of array  varables, as observed above, the formula~\eqref{eq:s04} is in fact of the kind
	\begin{equation}\label{eq:s05}
	\exists \ue\; ({\rm AllDiff}(\ue) \wedge \psi(\ux, \ua[\ue]/\uz))
	\end{equation}
	where $\psi(\ux,\uz)$ is a quantifier-free $\Sigma$-formula and $\psi(\ux, \ua[\ue]/\uz)$ is obtained from  $\psi$ by replacing the variables $\uz$ by the terms $\ua[\ue]$
	(notice that the $\uz$ are of element sorts).
	
	It is now evident that~\eqref{eq:s05} is satisfiable  in a model of the AB-PMAS iff the formula
	\begin{equation}\label{eq:s06}
	\psi(\ux,\uz)
	\end{equation}
	is satisfiable in $\Sigma$-structure  $\cM$. In fact, if we are given a $\Sigma$-structure
	$\cM$ and an assignment satisfying~\eqref{eq:s06}, we can easily expand $\cM$  to a model of the AB-PMAS by taking the $e$'s themselves as the elements of the interpretation of the index sorts;  in the so-expanded structure, we can interpret the array variables $\ua$ by taking  the $\ua[\ue]$  to be the elements assigned to the $\uz$ 
	in the satisfying assignment for~\eqref{eq:s06}.
	
	The satisfiability of~\eqref{eq:s06}  in a $\Sigma$-structure $\cM$ is clearly decidable (since the constraint satisfiability problem for $EUF$ is decidable).
\end{proof}
\vskip 2mm
The instantiation algorithm of Lemma~\ref{lem:sat} can be used to discharge the satisfiability tests in lines 2 and 3 of the algorithm described in Figure~\ref{fig:algorithm}, because the conjunction of a state formula and of the negation of a state formula is a $\exists\forall$-formula (notice that
$\iota$ is itself the negation of a state formula, according to the format of the initial formula).

\vskip 2mm\noindent
\textbf{Theorem~\ref{thm:sound-complete}}~\emph{Backward search for the safety problem is \emph{sound} and \emph{complete} 
for \emph{\async} \ABMAS{s}.} 
\vskip 2mm
\begin{proof}
	Recall that an \ABMAS $\ABM$ is safe w.r.t $\upsilon$ iff there is no interpretation $\I_0$ of relations, no $k\geq 0$ and no possible assignment to the individual and array variables $\ux^0,\ua^0, \ldots, \ux^k, \ua^k$ such that the formula~\eqref{eq:smc1}
	\[
	\iota(\ux^0, \ua^0) \wedge \tau(\ux^0,\ua^0, \ux^1, \ua^1) \wedge \cdots \wedge\tau(\ux^{k-1},\ua^{k-1},
	\ux^k,\ua^{k})\wedge \upsilon(\ux^k,\ua^{k})
	\]
	is valid in any model $\cN$ of $\ABM$.

	Hence, $\ABM$ is unsafe iff for some $n$, the formula~\eqref{eq:smc1} is satisfiable in a $\cN$ of $\ABM$. Thus, we shall concentrate on satisfiability in models of $\ABM$ in order to prove the Theorem.

	First of all, let us call $B_n$ (resp. $\phi_n$), with $n\geq 0$, the status of the variable $B$ (resp. $\phi$) after $n$ executions in Line 4 (resp. Line 5) of the backward search algorithm ($n=0$ corresponds to the status of the variables in Line~1). 
	Notice also that we have
	\begin{equation}\label{eq:pre_phi}
	T\models \phi_{j+1}\leftrightarrow Pre(\tau, \phi_j)
	\end{equation} for all $j$ and that
	\begin{equation}\label{eq:invariant}
	T\models B_n \leftrightarrow \bigvee_{0\leq j<n} \phi_j
	\end{equation}
	is an invariant of the algorithm.

	Now, the formula~\eqref{eq:smc1} is satisfiable in a model $\cN$ under a suitable assignment iff the formula
	\begin{eqnarray*}
		\iota(\ux^0, \ua^0) ~~\wedge
		& \exists \ua^1\exists\ux^1 (\tau(\ux^0,\ua^0, \ux^1, \ua^1) \wedge \cdots ~~~~~~~~~~~~~~~~~~~~~~~~~~~~~~~~~~~~~~~~
		\\ &
		\cdots
		\wedge \exists \ua^k\exists \ux^k(\tau(\ux^{k-1},\ua^{k-1},
		\ux^k,\ua^{k})\wedge \upsilon(\ux^k,\ua^{k}))\cdots)
	\end{eqnarray*}
	is satisfiable in $\cN$ under a suitable assignment; by Lemma~\ref{lem:eq1}, the latter is equivalent to a formula of the kind
	\begin{equation}\label{eq:a}
	\iota(\ux, \ua)~\wedge~\exists \ue\,\phi(\ue,\ux, \ua)
	\end{equation}
	where $\exists \ue\,\phi(\ue,\ux, \ua)$ is a state formula (thus $\phi$ is quantifier-free and the $\ue$
	are variables of index sorts  - we renamed $\ux^0, \ua^0$ as $\ux, \ua$).
	However the satisfiability of~\eqref{eq:a} is the same as the satisfiability of
	$\exists \ue\,(\iota(\ux,\ua) \wedge\phi(\ue,\ux, \ua))$; 
	the latter, in view of the structure of the initial formula, is
	a $\exists\forall$-formula and so
	Lemma~\ref{lem:sat} applies and shows that its satisfiability in a model of the AB-PMAS is decidable. The satisfiability of~\eqref{eq:smc1} is clearly equivalent to the satisfiability of $\iota\land\phi_n$: hence, the satisfiability test in Line~3 is effective thanks to the decidability guaranteed by Lemma~\ref{lem:sat}. In addition, if the backward search algorithm terminates with an $\mathsf{unsafe}$ outcome, then in view of Line~3 we clearly have that $\ABM$ is really unsafe.

	Now consider the satisfiability test in Line~2. This is again a satisfiability test for a $\exists\forall$-formula, thus it is decidable because of Lemma~\ref{lem:sat}. In case of a $\mathsf{safe}$ outcome, we have that $T\models \phi_n\to B_n$; 
	we claim that, if we continued executing the loop of backward search algorithm, we would nevertheless get that:
\begin{equation}\label{eq:claim}
T\models B_m\leftrightarrow B_n
\end{equation}
for all $m\geq n$.
We justify Claim~\eqref{eq:claim} below. 

From $T\models \phi_n\to B_n$, taking into  consideration that Formula~\eqref{eq:pre_phi} 
holds,
we get $T\models\phi_{n+1} \to Pre(\tau,B_n)$. Since $Pre$ commutes with disjunctions (i.e., $Pre(\tau,\bigvee_j \phi_j)$
is logically equivalent to $\bigvee_j Pre(\tau,\phi_j)$),  we also have $T\models Pre(\tau,B_n)\leftrightarrow \bigvee_{1\leq j\leq n} \phi_j$ by the Invariant~\eqref{eq:invariant} and
by Formula~\eqref{eq:pre_phi}
again.  
By using the entailment $T\models \phi_n\to B_n$ once more,  we get  $T\models\phi_{n+1} \to B_n$ and also that $T\models B_{n+1}\leftrightarrow B_n$, thus we finally obtain that $T\models \phi_{n+1} \to B_{n+1}$.  This argument can be repeated for all $m\geq n$, obtaining that $T\models B_m\leftrightarrow B_n$ for all $m\geq n$, i.e. Claim~\eqref{eq:claim}.
		
	This would entail that $\iota\wedge \phi_m$ is always unsatisfiable (because of~\eqref{eq:invariant} and because $\iota\wedge \phi_j$ was unsatisfiable
	for all $j<n$), which is the same (as remarked above) as saying that all formulae~\eqref{eq:smc1} are unsatisfiable. Thus $\ABM$ is safe.
\end{proof}

To sum up, in this subsection we remarked that for the algorithm described in Figure~\ref{fig:algorithm}, to be effective, 
we need
decision procedures for discharging the satisfiability tests in Lines~2-3. We noticed that the only formulae we need to test in these lines
        have a specific form (i.e. 
        they are
        $\exists\forall$-formulae). 
        In 
        the first technical lemmas (Lemmas~\ref{lem:eq1}) we show that 
        the preimage of a state formula is again a state formula; 
then, in a second technical lemma (Lemma~\ref{lem:sat}), we show that entailments between state formulae 
(more generally, satisfiability of formulae of the kind $\exists\forall$) can
be decided via finite instantiation techniques. These observations
  make both safety and fixpoint tests effective
  and constitute the skeleton of the proof of Theorem~\ref{thm:sound-complete}.

\subsection{Proof of Theorem~\ref{thm:decid-local}}\label{sec:app2}
\label{sec:termination}

Theorem~\ref{thm:sound-complete} gives a semi-decision procedure for unsafety: if
the system is unsafe, the procedure discovers it, but if the system is safe,
the procedure (still correct) may not terminate.  Termination is much more
difficult to achieve. 
 We present a termination result for PMASs,  obtained
via the use of well quasi-orders: the syntactic condition that guarantees termination was introduced in \cite{MSCS20}, but the result we provide in this paper is novel, since it applies to a different class of array-based systems. 
The strategy for proving termination consists of isolating sufficient conditions that imply that the embeddability relation between
models is a well-quasi-ordering.

Before stating and proving Theorem~\ref{thm:decid-local}, we need to recall 
 some basic facts about well-quasi-orders.  Recall that a
\emph{well-quasi-order} (wqo) is a set $W$ endowed with a reflexive-transitive
relation $\leq$ having the following property: for every infinite succession
\[
w_0, w_1, \dots, w_i, \dots
\]
of elements from $W$ there are $i, j$ such that $i<j$ and $w_i\leq w_j$
The fundamental result about wqo's that we are going to use in this section is the following theorem, known as Dickson's Lemma:   

\begin{theorem}
	The cartesian product of $k$-copies of $\mathbb N$ (and also of $\mathbb N \cup \{\infty\}$), with componentwise ordering, is a wqo.
\end{theorem}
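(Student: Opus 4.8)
The plan is to prove Dickson's Lemma by induction on $k$, using the standard reformulation of the well-quasi-order property in terms of ascending subsequences. First I would establish an auxiliary characterization: a quasi-order $(W,\le)$ is a wqo if and only if every infinite sequence $w_0, w_1, \dots$ of elements of $W$ admits an infinite non-decreasing subsequence. The ``if'' direction is immediate. For the ``only if'' direction, call an index $i$ \emph{bad} if there is no $j>i$ with $w_i\le w_j$; the set of bad indices must be finite, since otherwise the subsequence they index would be an infinite sequence with no pair $w_a\le w_b$ for $a<b$, contradicting the wqo hypothesis. Picking any index $i_0$ larger than all bad ones, $i_0$ is not bad, so there is $i_1>i_0$ with $w_{i_0}\le w_{i_1}$; iterating (each $i_n$ is again not bad) produces the desired infinite non-decreasing subsequence.

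Next I would fix uniform notation: let $M$ denote either $\mathbb{N}$ or $\mathbb{N}\cup\{\infty\}$. In both cases $(M,\le)$ is a linear order in which every non-empty subset has a least element — for $\mathbb{N}\cup\{\infty\}$ this follows by treating the subset $\{\infty\}$ separately and otherwise taking the least element of its intersection with $\mathbb{N}$ — hence $M$ is well-founded. The base case $k=1$ then follows from the auxiliary characterization: any infinite sequence in the linear order $M$ either contains an infinite non-decreasing subsequence or an infinite strictly decreasing one, and the latter is ruled out by well-foundedness, so $M$ is a wqo. For the inductive step, assume $M^k$ with componentwise order is a wqo, and take an infinite sequence $w_0,w_1,\dots$ in $M^{k+1}$, writing $w_i=(u_i,v_i)$ with $u_i\in M^k$ and $v_i\in M$. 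Applying the auxiliary characterization to the wqo $M^k$ yields an infinite subsequence along which $u_{i_0}\le u_{i_1}\le\cdots$ componentwise; applying the base case to $v_{i_0},v_{i_1},\dots$ in $M$ yields $a<b$ with $v_{i_a}\le v_{i_b}$, whence $w_{i_a}\le w_{i_b}$ componentwise with $i_a<i_b$, which is exactly the wqo condition for $M^{k+1}$.

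The argument is essentially routine; the only genuine content lies in the auxiliary characterization, specifically the finiteness of the set of bad indices, which is where the combinatorics of ``wqo'' enters, together with the (easy but necessary) observation that adjoining a top element to $\mathbb{N}$ preserves well-foundedness. I do not expect a real obstacle beyond carrying out the subsequence extractions cleanly, so the proof should be short.
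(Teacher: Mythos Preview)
Your proof is correct and follows the standard textbook route to Dickson's Lemma (the ``good/bad index'' extraction of an ascending subsequence, then induction on $k$). The paper, however, does not prove this theorem at all: it simply quotes it as a known result (``the fundamental result about wqo's that we are going to use \ldots\ known as Dickson's Lemma'') and then applies it as a black box inside the proof of Theorem~\ref{thm:decid-local}. So there is no paper-side argument to compare against; your proposal supplies a self-contained proof where the paper offers none.

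One very small point of presentation: in your base case you assert that ``any infinite sequence in the linear order $M$ either contains an infinite non-decreasing subsequence or an infinite strictly decreasing one'' and attribute this to the auxiliary characterization, but the characterization does not give you that dichotomy directly---it follows from the same bad-index argument specialized to a linear order (bad indices then yield a strictly decreasing subsequence). Alternatively, and more simply, both $\mathbb{N}$ and $\mathbb{N}\cup\{\infty\}$ are in fact \emph{well-ordered}, so the base case is immediate: pick $i$ to be any index where the sequence attains its minimum value and take any $j>i$. Either way the argument goes through.
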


Let $\tilde\Sigma$ be $\Sigma\cup \{\ua, \ux\}$, that is, the relational signature $\Sigma$ from Section~\ref{sec:mas-abs}
expanded with unary function symbols $\ua$ (one for every array variable $a$) and constants $\ux$ (thus, a
$\tilde \Sigma$-structure is a $\Sigma$-structure endowed with an
assignment to $\ux$ and $\ua$, which were variables and now are treated as
symbols of $\tilde\Sigma$).  For the following, we need the following definition: 

\begin{definition}
A $\tilde\Sigma$-structure $\cM$ is called 
\emph{cyclic}\footnote{This notion comes from the universal algebra terminology.}  if
it is generated by a \emph{single} element $e\in E^{\cM}$ (called \emph{generator} of  $\cM$), where $E$ is a sort of index type (i.e. $e$ belongs to the interpretation of a
sort $E$ of index type). 
\end{definition}

The previous definition intuitively means that \emph{all} the elements of the cyclic structures are obtained from the generator by applying the function symbols of  $\tilde\Sigma$ to the generator. 

 Since $\Sigma$ is relational, one can show
that there are only finitely many cyclic $\tilde\Sigma$-structures
$\cC_1,\ldots,\cC_N$ up to isomorphism.  With a $\tilde\Sigma$-structure $\cM$
we associate the tuple of numbers
$k_1(\cM), \dots, k_N(\cM)\in \mathbb N\cup\{\infty\}$ counting the numbers of
elements generating (as singletons)  cyclic substructures isomorphic to
$\cC_1, \dots, \cC_N$, respectively. 

 Now, we show that, if the tuple
 associated with $\cM$ is component-wise bigger than the one associated with
 $\cN$, then $\cM$ satisfies all the local formulae satisfied by $\cN$.

\begin{lemma}\label{lem:localinv}
	Let $\cM, \cN$ be $\tilde \Sigma$-structures. If the inequalities
	\[
	k_1(\cM)\leq k_1(\cN), \dots, k_N(\cM)\leq k_N(\cN)
	\]
	hold, then all local formulae true in $\cM$ are also true in $\cN$.
\end{lemma}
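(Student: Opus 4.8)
\emph{Proof plan.} Since a local formula is a disjunction, it suffices to fix one disjunct $\exists j_1\cdots\exists j_m\,(Eq(j_1,\dots,j_m)\wedge\bigwedge_{k=1}^m\phi_k(j_k,\ux,\ua))$ of the form \eqref{eq:local} that holds in $\cM$, and to show it holds in $\cN$. The plan is to take witnesses $e_1,\dots,e_m$ for this disjunct in $\cM$, to ``read off'' from each $e_k$ only the finite amount of information that $\phi_k$ can actually see — namely the isomorphism type of the cyclic $\tilde\Sigma$-substructure generated by $e_k$ — and then to re-instantiate the $j_k$'s in $\cN$ by elements of the same cyclic types, using the inequalities $k_i(\cM)\le k_i(\cN)$ to guarantee that enough such elements, pairwise distinct where needed, are available in $\cN$.

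The first key step is to make precise that $\phi_k$ sees only the cyclic type of its argument. The signature $\tilde\Sigma$ adds to the relational signature $\Sigma$ only the element-valued array functions $\ua$ and the element-sorted constants $\ux$ (together with the constants in $C$); in particular it contains no function or constant symbol of index sort. Hence, for an index element $e$ of any $\tilde\Sigma$-structure, the substructure $\langle e\rangle$ it generates consists of $e$, the elements $a(e)$ for the array symbols $a$, and the interpretations of the constants; it is finite, cyclic, and $e$ is its \emph{unique} index element. Since (as recalled before the statement, using that $\Sigma$ is relational) there are only finitely many cyclic $\tilde\Sigma$-structures $\cC_1,\dots,\cC_N$ up to isomorphism, and since each $\phi_k(j_k,\ux,\ua)$ is quantifier-free with the single free index variable $j_k$ — so all its atoms are $\Rels$-atoms on $\tilde\Sigma$-terms over $\{j_k\}$, which are evaluated inside $\langle e\rangle$ — quantifier-free absoluteness between a structure and its substructures gives $\cM\models\phi_k(e,\ux^\cM,\ua^\cM)$ iff $\langle e\rangle^\cM\models\phi_k$, and the latter depends only on the isomorphism type $\cC_i$ of $\langle e\rangle^\cM$ (any isomorphism matches up the unique index elements). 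So one defines $P_k\subseteq\{1,\dots,N\}$ as the set of $i$ for which $\phi_k$ holds in $\cC_i$ at its unique index element, and then, in \emph{every} $\tilde\Sigma$-structure, $\phi_k$ holds at $e$ exactly when the type of $\langle e\rangle$ lies in $P_k$.

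With this in hand the witness transfer is bookkeeping. Given $\cM$-witnesses $e_1,\dots,e_m$, partition $\{1,\dots,m\}$ by \emph{actual} equality of the witnesses (which refines the partition forced by the equalities occurring in $Eq$, since $\cM\models Eq(e_1,\dots,e_m)$), obtaining classes $H_1,\dots,H_p$ with pairwise distinct values $v_1,\dots,v_p$, where $v_l$ generates a cyclic substructure of some type $\cC_{i_l}$. For each type $\cC_i$ the number of $l$ with $i_l=i$ is at most the number of index elements of $\cM$ whose generated substructure has type $\cC_i$, i.e. at most $k_i(\cM)\le k_i(\cN)$; hence one may pick, type by type, \emph{pairwise distinct} elements $w_1,\dots,w_p$ of $\cN$ with $\langle w_l\rangle^\cN\cong\cC_{i_l}$. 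Set $e'_k:=w_l$ whenever $k\in H_l$. Then $\cN\models\phi_k(e'_k,\ux^\cN,\ua^\cN)$ since the type of $\langle e'_k\rangle^\cN$ equals that of $\langle e_k\rangle^\cM$, which lies in $P_k$ because $\cM\models\phi_k(e_k,\ux^\cM,\ua^\cM)$; and $\cN\models Eq(e'_1,\dots,e'_m)$ because an equality $j_k=j_{k'}$ in $Eq$ forces $e_k=e_{k'}$, hence $k,k'\in H_l$ for the same $l$ and $e'_k=e'_{k'}$, while a disequality $j_k\ne j_{k'}$ in $Eq$ forces $e_k\ne e_{k'}$, hence $k,k'$ lie in distinct classes and $e'_k\ne e'_{k'}$ by the choice of distinct $w$'s. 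Thus the disjunct, and therefore the local formula, holds in $\cN$.

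I expect the main obstacle to be the first key step — rigorously pinning down that no information beyond the cyclic type of $\langle j_k\rangle$ can leak into the formula. This rests squarely on the syntactic locality restriction (each $\phi_k$ mentions only its own index variable $j_k$, and $Eq$ relates the $j$'s only by (dis)equalities, never by constraints such as $a(j_k)=a(j_{k'})$) together with the absence of index-sorted symbols in $\tilde\Sigma$, which is exactly what makes each cyclic substructure carry a unique index element and hence lets isomorphism types behave as a clean invariant. The counting part is routine once the finiteness of the set of cyclic types is available, which is precisely where relationality of $\Sigma$ is used; this is the analogue of the corresponding step in \cite{MSCS20,CGGMR19}, the difference being that here $\Sigma$ carries free relation symbols, so the cyclic types (and the formulae involved) record relational atoms rather than only equalities.
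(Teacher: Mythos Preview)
Your proposal is correct and follows essentially the same approach as the paper: both arguments hinge on the observation that the truth of each $\phi_k(j_k,\ux,\ua)$ depends only on the cyclic $\tilde\Sigma$-substructure generated by the witness for $j_k$, and then use the inequalities $k_i(\cM)\le k_i(\cN)$ to find matching witnesses in $\cN$. Your treatment is in fact more careful than the paper's about the $Eq$ part --- the paper glosses over how the (dis)equalities among the $j_k$ are preserved, whereas your partition-by-actual-equality argument handles this cleanly.
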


\begin{proof}
	Notice that local formulae (viewed in $\tilde\Sigma$) are sentences, because they do not have free variable occurrences - the $\ua, \ux$ are now constant function symbols and individual constants, respectively.  The proof of the lemma is fairly obvious: notice that, once we assigned some $\alpha(e_i)$ in $\cM$ to the variable $e_i$, the truth of a formula like $\phi(e_i, \ux, \ua)$ under such an assignment depends only on the $\tilde \Sigma$-substructure generated by $\alpha(e_i)$, because  $\phi$ is quantifier-free and $e_i$ is the only $\tilde \Sigma$-variable occurring in it. 
	In fact, if a local state formula $ \exists e_1\cdots \exists e_k \left( \delta(e_1,\dots, e_k)  \land
	\bigwedge_{i=1}^k \phi_i(e_i,\ux,\ua)\right)$  is true in $\cM$, then there
	exist elements $\bar{e}_1,\cdots, \bar{e}_k$
	(in the interpretation of some index sorts),  each of which makes $\phi_i$ true. Hence, $\phi_i$ is also true  in the corresponding cyclic structure generated by $\bar{e}_i$. Since $k_1(\cM)\leq k_1(\cN), \dots, k_N(\cM)\leq k_N(\cN)$ hold, then also in $\cN$ there are at least as many elements in the interpretation of index sorts as there are in $\cM$ that validate all the $\phi_i$ . Thus, we get that the formula $\exists e_1\cdots \exists e_k \left( \delta(e_1,\dots, e_k)  \land
	\bigwedge_{i=1}^k \phi_i(e_i,\ux,\ua)\right)$ is true also in $\cN$, as wanted.
\end{proof}

The structure of the previous proof follows the schema of a proof from~\cite{MSCS20}, but there is a \emph{significant} difference: the decidability result  in \cite{MSCS20} is based on locality and applies to array-based systems whose FO-signatures do \emph{not} have relational symbols, whereas here our array-based systems \emph{do} have free relational symbols. This difference is reflected in the kind of FO-formulae that are involved.

Now we are ready to prove our first termination and decidability result. 

\vskip 2mm\noindent
\textbf{Theorem~\ref{thm:decid-local}}\emph{
$\textit{BReach}(\ABM,\goalab)$ always \emph{terminates} if \myi the \ABMAS is \async and \myii its transition  formula $\tau=\bigvee\hat\tau$ is a disjunction of \emph{local} transition formulae and \myiii $\goalab$ is local.}
\vskip 2mm
\begin{proof}
	Suppose, for the sake of contradiction, that the algorithm does not terminate. Then the fixpoint test of Line 2 fails infinitely often. Recalling that the $T$-equivalence of
	$B_n$  and of $\bigvee_{0\leq k<n} \phi_k$ is an invariant of the algorithm (here $\phi_n, B_n$ are the status of the variables 
	$\phi, B$ after $n$ execution of the main loop), this means that there are models
	\[
	\cM_0, \cM_1, \dots, \cM_j, \dots
	\]
	such that for all $j$, we have that  $\cM_j\models \phi_j$ and $\cM_j \not \models \phi_i$ (all $i<j$). But the $\phi_j$ are all local formulae, so considering the
	tuple of cardinals $k_1(\cM_j), \dots, k_N(\cM_j)$ and Lemma~\ref{lem:localinv}, we get a contradiction, in view of Dickson's Lemma. This is because, by Dickson's Lemma, $(\mathbb N \cup \{\infty\})^N$ is a wqo, so there exist $i$, $j$ such that $i<j$ and $k_1(\cM_i)\leq k_1(\cM_j), \dots, k_N(\cM_i)\leq k_N(\cM_j)$. Using Lemma~\ref{lem:localinv}, we get that $\phi_i$, which is local and true in $\cM_i$, is also true in $\cM_j$, which is a contradiction.
\end{proof}
\vskip 2mm

\end{document}